\documentclass[twoside]{article}

\usepackage[accepted]{aistats2022}
\usepackage[round]{natbib}

\usepackage[utf8]{inputenc} 
\usepackage[T1]{fontenc}    
\usepackage{hyperref}       
\usepackage{url}            
\usepackage{booktabs}       
\usepackage{amsfonts}       
\usepackage{nicefrac}       
\usepackage{microtype}      
\usepackage{xcolor}         

\usepackage{amsmath,amssymb,amsthm,amsfonts,bm,bbm}
\usepackage{mathtools}
\usepackage{cancel}

\usepackage{bbm}
\usepackage{nicefrac}       

\usepackage{amsmath,amsfonts,bm}
\usepackage{amssymb,amsthm}
\usepackage{mathtools}
\usepackage{algorithm,algorithmic}
\usepackage{appendix}
\usepackage{enumerate}

\newtheorem{theorem}{Theorem}
\newtheorem{lemma}{Lemma}
\newtheorem{definition}{Definition}

\newenvironment{itemize*}%
{\begin{itemize}[leftmargin=*,topsep=0pt]%
		\setlength{\itemsep}{0pt}%
		\setlength{\parskip}{0pt}}%
	{\end{itemize}}

\newenvironment{enumerate*}%
{\begin{enumerate}[leftmargin=*,topsep=0pt]%
		\setlength{\itemsep}{0pt}%
		\setlength{\parskip}{0pt}}%
	{\end{enumerate}}

\def\shownotes{1} 
\ifnum\shownotes=1
\newcommand{\simon}[1]{[\textcolor{red}{Simon: #1}]}
\newcommand{\jnote}[1]{\textcolor{red}{[JL: #1]}}
\newcommand{\yuandong}[1]{[\textcolor{red}{YD: #1}]}
\newcommand{\yulai}[1]{[\textcolor{blue}{Yulai: #1}]}
\else
\newcommand{\simon}[1]{}
\newcommand{\jnote}[1]{}
\newcommand{\yuandong}[1]{}
\newcommand{\yulai}[1]{}
\fi


\usepackage{amsmath,amsfonts,bm}









\def\eqref#1{equation~\ref{#1}}









\def\1{\bm{1}}










\DeclareMathAlphabet{\mathsfit}{\encodingdefault}{\sfdefault}{m}{sl}
\SetMathAlphabet{\mathsfit}{bold}{\encodingdefault}{\sfdefault}{bx}{n}

\def\gA{{\mathcal{A}}}

\def\gC{{\mathcal{C}}}

\def\gF{{\mathcal{F}}}

\def\gH{{\mathcal{H}}}

\def\gM{{\mathcal{M}}}

\def\gO{{\mathcal{O}}}
\def\gP{{\mathcal{P}}}

\def\gS{{\mathcal{S}}}
\def\gT{{\mathcal{T}}}

\def\gW{{\mathcal{W}}}
\def\gX{{\mathcal{X}}}



\def\sR{{\mathbb{R}}}








\newcommand{\E}{\mathbb{E}}



\begin{document}
\twocolumn[

\aistatstitle{Provably Efficient Policy Optimization for Two-Player Zero-Sum Markov Games}

\aistatsauthor{ Yulai Zhao \And Yuandong Tian \And  Jason D. Lee \And Simon S. Du }

\aistatsaddress{ Tsinghua University \And  Meta AI Research \And Princeton University \And University of Washington\\
Meta AI Research} ]

\begin{abstract}
    Policy-based methods with function approximation are widely used for solving two-player zero-sum games with large state and/or action spaces.
However, it remains elusive how to obtain optimization and statistical guarantees for such algorithms.
We present a new policy optimization algorithm with function approximation and prove that under standard regularity conditions on the Markov game and the function approximation class, our algorithm finds a near-optimal policy within a polynomial number of samples and iterations.
To our knowledge, this is the first provably efficient policy optimization algorithm with function approximation that solves two-player zero-sum Markov games.

\end{abstract}

\section{Introduction}
\label{sec:intro}
Two-player zero-sum Markov game is a popular setting with many applications, such as Go~\citep{silver2016mastering}, StarCraft \uppercase\expandafter{\romannumeral2}~\citep{vinyals2019grandmaster}, and poker~\citep{brown2018superhuman}.
In this setting, the goal of player one is to find a policy that achieves the maximum reward against player two who plays optimally to minimize the reward in response to player one's policy.

Policy optimization methods are widely used for solving zero-sum games. These algorithms often constrain the policy in a parametric form, and compute the gradient of the cumulative reward with respect to the parameters using the policy gradient theorem or its variants to update the parameters iteratively~\citep{sutton2000policy, kakade2002natural, silver2014deterministic}. 
Due to its flexibility, a wide range of successful results are attained by policy optimization methods. For example, \citet{lockhart2019computing} performed direct policy optimization against worst-case opponents and empirically demonstrate their effectiveness in Kuhn Poker and Goofspiel card game.
\citet{foerster2017learning} invented LOLA where each agent shapes the learning of other agents. It gave the highest average returns on the iterated prisoners' dilemma (IPD).

Despite the large body of empirical work using policy optimization methods for two-player zero-sum Markov games, theoretical studies are very limited.
In this paper, we aim to answer the following fundamental question:
\begin{center}
	\emph{Can we design a provably efficient policy optimization algorithm with function approximation	for two-player zero-sum Markov games with a large state-action space?}
\end{center}

We answer the above question affirmatively. We summarize our contributions below.

\paragraph{Our contributions.} 
 We design a new, provably efficient policy optimization algorithm for two-player zero-sum Markov games based on the natural policy gradient (NPG) method~\citep{kakade2002natural}. 
 On a high level, our algorithm has the two-step style as in previous work on value-based algorithms for two-player zero-sum Markov games~\citep{perolat2015approximate}.
In the \textbf{Greedy step}, we aim to find \emph{a pair of policies} that approximately solves matrix games for a given value function, and in the \textbf{Iteration step}, we aim to update the value function upon the current policy.
In contrast to their value-based algorithm where function approximation is used for value functions, our results are entirely policy-based. We only have \emph{function approximation for policies}.
Therefore, we need to tackle additional challenges which are absent in value-based algorithms.
\begin{enumerate}
\item First, in the \textbf{Greedy step}, the algorithms in~\citep{perolat2015approximate} requires solving two-player zero-sum matrix games \emph{for every state} in a value-based manner. The computational complexity scales with the size of the state-action space, which can be infeasible.
For finding the equilibria for state-wise matrix games, we employ policy-based methods whose sample complexity only scales with the complexity of the function class (e.g., feature dimension for linear function approximation) instead of the size of state-action space. Specifically, we design a subroutine that combines two-player policy gradients with the optimistic mirror descent (OMD) updates~\citep{rakhlin2013optimization} to solve the zero-sum matrix game in a computational and statistical efficient way.
\item Second, in the \textbf{Iteration step},
\citet{perolat2015approximate} used \emph{Generalized Policy Iteration} to evaluate the value function while we only have function approximation for policies. 
We leverage recent developments on NPG in single-agent RL~\citep{agarwal2020optimality} to update \emph{policies} in this step and represent the value function using \emph{policies} instead of explicitly storing the value function. 
\item Third, technically, we incorporates policy-based methods into value-based schemes, and develop new perturbation analyses for policy-based methods, both of which may be of independent interest.
\end{enumerate}

Theoretically, first, to illustrate the main idea of our algorithm, in Section~\ref{sec:population_algo}, we study an idealized ``population" tabular Markov game setting where we can access the population quantities, including the true policy gradients and the Fisher information.
     We prove an $\widetilde{O}\left(\frac{1}{T}\right)$ rate,\footnote{$\widetilde{O}\left(\cdot\right)$ hides logarithmic factors.} where $T$ is the number of iterations.
  This result is interesting in its own right because this matches the rate in the single-agent RL setting~\citep{agarwal2020optimality}.
We further obtain an improved rate in the entropy-regularized setting~\citep{cen2020fast}.

We present our main algorithm and theoretical results in Section~\ref{sec:online_algo} where we study Markov games with a large state-action space, and log-linear policy parameterization is used for generalization.
Instead of the idealized ``population" setting, we study the realistic online setting, where we can only access the model through interactions.
We prove an $\widetilde{O}\left(\frac{1}{\sqrt{T}}+\frac{1}{N^{1/4}} \right)$ rate where $T$ is the number of iterations and $N$ is the number of samples (interaction with the model). 
To our knowledge, this is the first quantitative analysis of online policy optimization methods with function approximation for two-player zero-sum Markov games.

\section{Related Work}
\label{sec:rel}

A large number of empirical works have proven the validity and efficiency of PG/NPG based methods in games and other applications~\citep{silver2016mastering, silver2017mastering, guo2016deep,mousavi2017traffic, tian2019elf}.
Below we mostly focus on relevant algorithmic and theoretical papers.

There is a long line of work developing computationally efficient algorithms for multi-agent RL in Markov games.
  Value-based approaches~\citep{shapley1953stochastic, patek1997stochastic, littman1994markov, bai2020provable, bai2020near} try to find the optimal value function. 
When the size of the state-action space is large, Approximate Dynamic Programming (ADP) techniques are often incorporated into value-based methods.
Extending the error propagation scheme of ADP developed by \citet{scherrer2012approximate} to two-player zero-sum games, \citet{perolat2015approximate} obtained a performance bound in general norms. Using this error propagation scheme on ADP, \citet{perolat2016use} adapted three value-based algorithms (PSDP, NSVI, NSPI) to the two-player zero-sum setting. 
 Recently, \citet{yu2019provable} replaced the policy evaluation step of Approximate Modified Policy Iteration (AMPI) introduced by \citet{scherrer2015approximate} with function approximation in a Reproducing Kernel Hilbert Space (RKHS) and proved linear convergence to $l_\infty$-norm up to a statistical error. While focusing on policy-based methods, our paper also leverages the error propagation analysis~\citep{perolat2015approximate}.

Another type of algorithms on two-player zero-sum Markov games is policy-based.
One family of algorithms is based on fictitious play~\citep{brown1951iterative, robinson1951iterative}.
Fictitious play is a classical strategy proposed by \citet{brown1951iterative}, where each player adopts a policy that best responds to the average policy of other agents inferred from historical data. 
For example, \citet{heinrich2015fictitious} introduced two variants of fictitious play: 1) an algorithm for extensive-form games which is realization-equivalent to its normal-form counterpart, 2) Fictitious Self-Play (FSP) which is
a framework computing the best response via fitted $Q$-iteration. 
Our paper also aims to find the best response iteratively.
Another family of policy-based methods is based on the idea of counterfactual regret minimization (CFR)~\citep{zinkevich2008regret}. 
 \citet{brown2019solving} invented a novel CFR variant which utilizes techniques such as  reweighting iterations and leveraging optimistic regret matching. 
Although these two families of algorithms are similar to ours in spirit, they are quite different technically and their theoretical analysis does not apply to our setting.

 The current paper focuses on using NPG techniques for solving two-player zero-sum Markov games. 
NPG is first introduced by \citet{kakade2002natural} to better explore the underlying structure of the reinforcement learning (RL) problem instance.
Extensions of NPG methods are also used to solve zero-sum games. \citet{zhang2019policy,bu2019global} applied projected natural nested gradient under a linear quadratic setting, a significant class of zero-sum Markov games.
Extensions to imitation learning were also studied in~\citep{song2018multi}.

In terms of theoretical analysis on PG/NPG methods, \citet{agarwal2020optimality} showed that tabular NPG could provide an $\gO(\nicefrac{1}{T})$ iteration complexity, as well as a sample complexity of $\gO(\nicefrac{1}{N^{\frac{1}{4}}})$ for online NPG with function approximation. 
In contrast, we provide bounds for the two-player zero-sum case, which is significantly more challenging.
In two-player zero-sum games, the non-stationary environment faced by each individual agent invalidates the stationary structure of the single-agent setting, and thus precludes the direct application of the convergence proof from the single-agent setting.
Furthermore, each agent in two-player zero-sum games must adapt to the other agent's policy, which poses additional difficulties. 
\citet{zhang2020global} proposed a new variant of PG methods that yielded unbiased estimates of policy gradients, which enabled non-convex optimization tools to be applied in establishing global convergence. Despite being non-convex, \citet{agarwal2020optimality,bhandari2019global} identified structural properties of finite Markov decision processes (MDPs): the objective function has no suboptimal local minimum. They further gave conditions under which any local minimum is near-optimal. 

\citet{schulman2015trust} developed a practical algorithm called TRPO which could be seen as a KL divergence-constrained variant of NPG. They show monotonic improvements of the expected return during optimization. \citet{shani2020adaptive} considered a sample-based TRPO~\citep{schulman2015trust} and proved an $\tilde{\gO}(\nicefrac{1}{\sqrt{N}})$ convergence rate to the global optimum, which could be improved to $\tilde{\gO}(\nicefrac{1}{N})$ when regularized. \citet{cen2020fast} showed that fast convergence rate of NPG methods can be obtained with entropy regularization. Applying NPG to linear quadratic games, \citet{zhang2019policy} and \citet{bu2019global} proved that: for finding Nash equilibrium, NPG enjoys sublinear convergence rate. Both analyses rely on the linearity of the dynamics which does not hold in general Markov games considered in this paper.

Recently, \citet{daskalakis2020independent} showed independent policy gradient methods converge to a min-max equilibrium. Compared to our work, they focused on the tabular case and did not study the function approximation. 
They also assumed that the probability of stopping at any state~\citep[Section 2]{daskalakis2020independent} is bounded below from a certain positive number, which is not a standard modelling approach and is hard to validate empirically. 
We instead use concentrability coefficients as a characterization of the game structure (cf. Definition~\ref{defn:concen}). In general, these two conditions do not imply each other. 
Comparing with their work, ours is cheap in sample complexity. To find an $\epsilon$-optimal solution, their sample complexity has an $O\left(\epsilon^{-12.5}\right)$ scaling whereas ours has an $O\left(\epsilon^{-6}\right)$ scaling.

Our work is related to Optimistic Mirror Descent 
(OMD) and its behavior in zero-sum games, which have received more attention lately. ~\citet{daskalakis2018training} proposed the use of optimistic mirror decent for training Wasserstein GANs to address the limit cycling problem in experiments. They also proved convergence to a equilibrium in bilinear zero-sum games. Generalizing~\citep{daskalakis2018training}, ~\citet{mertikopoulos2018optimistic} showed OMD converged in a class of non-monotone problems satisfying \emph{coherence}. Their work made concrete steps toward establishing convergence beyond convex-concave games.

\section{Preliminaries}
\label{sec:pre}
In this section, we introduce the material background on two-player zero-sum Markov games and specify several quantities which will be used to analyze our algorithms for different settings.

\subsection{Two-Player zero-sum Markov Games.}
In this paper, we consider the centralized setting where we can control both players in the training phase to learn good policies.
we focus on infinite-horizon discounted two-player zero-sum Markov games, which can be described by a tuple $\gM = (\gS, \gA, \gP, r, \gamma)$: a set of states $\gS$, a set of actions $\gA$, a transition probability $\gP: \gS \times \gA \times \gA \to \Delta(\gS)$, a reward function $r : \gS \times \gA \times \gA \to [0, 1] $, and a discount factor $\gamma \in [0, 1)$. 
We let $\sigma$ to be the initial state distribution and
define policies as probability distributions over the action space: $x, f \in \gS \to \Delta(\gA)$. \footnote{
For clarity we assume two players share the same set of actions, it is straight forward to generalize to the setting where two action sets are different. See Section~\ref{sec:online_algo}.}
The value function $V^{x,f}: \gS \rightarrow \mathbb{R}$ is defined as:
\begin{gather*}
    V^{x,f} (s)\! = \!\mathop{\E} \limits_{ \substack{a_t \sim x(\cdot | s_t)\\ b_t \sim f(\cdot | s_t)\\ s_{t+1} \sim \gP( \cdot | s_t, a_t, b_t)} } \!\left[ \sum_{t=0}^{\infty}{\gamma^t  r(s_t, a_t, b_t) } \big{|} s_0 = s \right].
\end{gather*}
we use distribution $\sigma$ as the optimization measure we use to train the policy and use distribution $\rho$ as the performance measure of our interest. We remark that these two separate measures are widely used in analyzing approximate dynamic programming and policy gradient~\citep{agarwal2020optimality, perolat2015approximate}. We overload notations and define $V^{x, f}(\rho)$ as the expected value function of interest, i.e.
$ V^{x, f}(\rho) \coloneqq \E_{s \sim \rho}V^{x, f}(s).$

In a two-player zero-sum Markov game, player one ($x$) wants to maximize the value function and the other player ($f$) wants to minimize it. 
We define the Markov game's state-action value function $Q^{x, f}: \gS \times \gA \times \gA \to \mathbb{R}$, the advantage function $A^{x, f}:  \gS \times \gA \times \gA \to \mathbb{R}$, and the state visitation function $d_{s_0}^{x, f}: \gS \to [0,1]$ as
\begin{align*}
    Q^{x,f} (s,a,b) &= r(s, a, b) + \gamma \mathop{\E} \limits_{ \substack{s^\prime \sim \gP(\cdot | s,a,b)}} V^{x, f}(s^\prime),\\
    A^{x, f} (s,a,b) &= Q^{x, f}(s,a,b) - V^{x, f}(s),\\
    d_{s_0}^{x, f} (s) &= (1-\gamma)\sum_{t=0}^{\infty}\gamma^{t} Pr (s_t=s|s_0, x, f)
\end{align*}
where $s_0 \in \gS$ is an initial state, respectively. With the state visitation function at hand, we are prepared to introduce NPG~\citep{kakade2002natural} for two-player zero-sum games which relies on the Fisher information matrix.
Given player one's policy $x$, player two's policy $f$ parameterized by $\theta$, and starting state distribution $\sigma$, we define the Fisher information matrix $F_\sigma (\theta)$ as: 
\begin{gather*}
    F_\sigma (\theta) = \E_{s \sim d_\sigma^{x, f}} \E_{b \sim f(\cdot|s)} \nabla_{\theta} \log{f(b|s)} \nabla_{\theta} \log{f(b|s)}^\top,
\end{gather*}
where we denote $d_\sigma^{x, f} = \E_{s_0 \sim \sigma} d_{s_0}^{x, f}$ by the expectation form of the state visitation distribution.

One important concept in RL is the Bellman operator. For two-player zero-sum Markov games and two behavior policies $x$ and $f$, we define $\gP_{x,f}(s^\prime|s) = \E_{a\sim x(\cdot|s), b\sim f(\cdot|s)} \gP(s^\prime|s,a,b)$ which performs as the transition kernel from $s$ to any $s^\prime\in \gS$ and $r_{x, f}(s) = \E_{a\sim x(\cdot|s), b\sim f(\cdot|s)}r(s,a,b)$ which represents the reward each player can expect
with policies $(x,f)$. Bellman operators $\gT_{x,f}, \gT_{x}, \gT$ act on any value function $v: \gS \to \sR$ and update it
\begin{itemize}
	\item $\gT_{x,f}v \coloneqq r_{x, f} + \gamma \gP_{x, f}v$, which generalizes the standard Bellman operator. 
	\item $\gT_{x}v \coloneqq \inf_{f}  \gT_{x, f}v$, which is an asymmetric operator  by letting $f$ to be optimal. \footnote{Here we focus on max player $x$ (see Eq.~\ref{eq_metric}). If we replace $x$ by $f$, we will have an analogous notation for min player.}
	\item $\gT v \coloneqq \sup_{x} \gT_{x} v = \sup_{x} \inf_{f} \gT_{x, f } v $, which generalizes the standard Bellman optimality operator. It reflects notions of minimax equilibrium in essence.
\end{itemize}

\citet{perolat2015approximate} introduced these operators as generalized counterparts of single agent RL. 
We are able to adopt the dynamic programming scheme only once the Bellman operators are introduced.

Since we are considering a learning problem, we need to collect samples from the environment. 
We assume we can stop and restart at any time.
With this, we can have the following sampling oracle.
\paragraph{Episodic Sampling Oracle} \label{sec5_assump} For a fixed state-action distribution $\nu_0$, we can start from $s_0, a_0, b_0 \sim \nu_0$, act according to any policy pair $(x, f)$, and terminate when desired. We obtain unbiased estimates of the \emph{on policy} state-action distribution
\begin{align} \label{eq_stateaction_distribution}
      &\nu_{\nu_0}^{x, f} (s, a, b ) = (1-\gamma) \cdot \notag\\
      &\E_{s_0, a_0, b_0 \sim \nu_0} \sum_{t=0}^\infty \gamma^t \text{Pr} (s_t=s, a_t=a, b_t =b | s_0, a_0, b_0)
\end{align}
which can be used for acquiring an unbiased $Q^{x,f}(s,a,b)$ where $s, a, b \sim \nu_{\nu_0}^{x, f}$. See \citep[Algorithm 1]{agarwal2020optimality} for a sampler. 

This oracle essentially requires that we can terminate at any time and restart, therefore many real-world applications including games and physics simulation (e.g., OpenFOAM~\citep{weller1998tensorial}) admit this oracle.
This oracle is also used in the analysis~\citep{agarwal2020optimality} (see Algorithm 1,3 and Assumption 6.3 therein).

The oracle is essential in analysis, technically because policy gradient methods need to estimate the values. This is the same reason as in the single-agent setting~\citep{agarwal2020optimality}. Moreover, we believe this sampling oracle is not a strong assumption: it only requires that we can terminate at any time and restart. This is much weaker than the generative model assumption. The generative model assumes that one can query \emph{any} state-action pair where we only require we can restart from a fixed initial distribution.

\citet{shapley1953stochastic} show that $(x^*,f^*)$ is a pair of Nash equilibrium (NE) if the following inequalities hold for any state distribution $\rho$ and policy pair $(x,f)$:
\begin{align}\label{sec3_NE_equa}
    V^{x, f^*}(\rho) \le V^{x^*, f^*}(\rho) = V^*(\rho) \le V^{x^*, f}(\rho).
\end{align}
NE always exists for discounted two-player zero-sum Markov Games~\citep{filar2012competitive}.
In practice, we seek to find an approximate pair of NE instead of an exact solution. The goal of this paper is to output a policy $x$ that makes the metric
\begin{equation} \label{eq_metric}
    V^*(\rho) - \inf_{f} V^{x, f}(\rho)
    \vspace{-1em}
\end{equation}
small where $\rho$ is some state distribution of interest.
This metric measures the performance of $x$ against the worst-case $f$. If it is less than $\epsilon$, we call $x$ an one-sided $\epsilon$-approximate NE, \footnote{From an optimization perspective, the sampling complexity of finding a solution so that both the min and max player are approximate NE scales only twice as large as that in one-sided case, since we may apply algorithms with the roles switched.} it has been used by~\citep{daskalakis2007progress, goos2018near, deligkas2017computing, babichenko2020communication,daskalakis2020independent}. 

\subsection{Function Approximation} \label{sec_FA}
This paper studies function approximation to generalize across a large state space in Section~\ref{sec:online_algo}.
To represent both behavior policies $x$ and $f$, we adopt a log-linear parameterization: for a coefficient vector $\theta \in \sR^{d}$, the associated probability of choosing action $a$ under state $s$, $\pi_\theta (a|s)$, is given by
$\frac{\exp{(\theta^\top \phi_{s,a})}}{\sum_{a^\prime \in \gA} \exp{(\theta^\top \phi_{s, a^\prime})}}$
where $\phi_{s,a}$ is a feature vector representation of $s$ and $a$. This parameterization has been used in~\citep{branavan2009reinforcement, gimpel2010softmax, heess2013actor}.
We impose a regularity condition such that every $\|\phi_{s,a}\|_2 \le D.$ Note that log-linear parameterization is $D^2$-smooth in terms of $\theta$~\citep{agarwal2020optimality}. \footnote{We follow standard smoothness definition. A function $f$ is said to be $\beta$-smooth if for all $x, x^\prime \in \sR^d$: $\|\nabla f(x) - \nabla f(x^\prime)\|_2 \le \beta \|x - x^\prime\|_2.$} This term is also known as \emph{Policy Smoothness} when analyzing PG methods.

\subsection{Problem-Dependent Quantities.}
Our analysis relies on several problem-dependent quantities. 
We denote weighted $L_p$-norm of function $f$ on state space $\gS$ as $\|f\|_{p, \rho} = \left( \sum_{s \in \gS} \rho(s) |f(s)|^p \right)^{\frac{1}{p}}$.

The first problem-dependent quantity is used to measure the inherent dynamics of Markov games.
\begin{definition} [Concentrability Coefficients]
	\label{defn:concen}
	 Given two distributions over states: $\rho$ and $\sigma$. When $\sigma$ is element-wise positive, define
\begin{align*}
        c_{\rho, \sigma}(j) &= \sup_{x^1, f^1, \cdots x^j, f^j \in \gS \xrightarrow[]{} \Delta(\gA)} \left\| \frac{\rho \gP_{x^1, f^1}  \cdots \gP_{x^j, f^j} }{ \sigma} \right\|_{\infty},\\
        \gC_{\rho, \sigma}^\prime &= (1-\gamma)^2 \sum_{m \ge 1} m \gamma^{m-1} c_{\rho, \sigma}(m-1),\\
        \gC_{\rho, \sigma}^{l, k, d} &= \frac{(1-\gamma)^2}{\gamma^l-\gamma^k} \sum_{i=l}^{k-1} \sum_{j=i}^\infty \gamma^j c_{\rho, \sigma}(j+d).
\end{align*}
\end{definition}
Here, $x^1, f^1, \cdots x^j, f^j $ are $j$ pairs of policies. Intuitively, the first term quantifies the distribution shift after taking $j$ pairs of steps starting from $\rho$. The second term describes the accumulative effect of discounted distribution shifts. Finally, the last term represents the additive performance of $(k-l)$ accumulative distribution shift and thus it is often considered as stricter condition. See~\citep{scherrer2014approximate} for a thorough comparison on these coefficients.
Generally speaking, if $\sigma$ is sufficiently diverse across states, then these quantities are bounded from above.
\citep{chen2019information} pointed out that small concentrability coefficients reflect a restriction on the MDPs dynamics.
Concentrability coefficients are widely used in analyzing the convergence of approximate dynamic programming algorithms~\citep{munos2005error, antos2008learning, scherrer2014approximate, perolat2015approximate} and recently in analyzing PG methods~\citep{agarwal2020optimality}.
In particular,~\citep{agarwal2020optimality} gave an example to show the \emph{dependency on concentrability coefficients is necessary}.
In these papers, their upper bounds all depend on the concentrability coefficients.
For our two-player setting, we use the same definition of concentrability coefficients as~\citep{perolat2015approximate}.

The second quantity measures how well a parameterized class can approximate in terms of a metric.
\begin{definition}[Approximation Error] 
\label{defn:approx_error}
Given a space $\gW$ and a loss function $L: \gW \rightarrow R$, we define $\epsilon_{approx} = \min_{w \in \gW} L(w)$ as the approximation error of $\gW$.
\end{definition}

This concept is widely used for analyzing function approximation~\citep{menache2005basis, jiang2015abstraction}, state abstractions schemes~\citep{jiang2015abstraction} and representation learning in RL~\citep{bellemare2019geometric}. It explicitly describes the capacity of a parameter set.

\section{Warm-up: Population Algorithm for Tabular Case} 
\label{sec:population_algo}
We first introduce the population version algorithm for the tabular case with the exact Fisher information matrix and policy gradients.
The algorithm is spiritually similar to fictitious play.
We enforce $x, f$ to be tabular softmax parameterized by $\xi, \theta \in \sR^{|\gS| \times |\gA|}$
\paragraph{Parameterization}
For vector $\theta \in \sR^{|\gS| \times |\gA|}$, the probability associates to choosing action $a$ under state $s$, $\pi_\theta (a|s)$, equals
$
    \frac{\exp{(\theta_{s,a})}}{\sum_{a^\prime \in \gA} \exp{(\theta_{s, a^\prime})}}.
$ One can verify that $\pi_\theta$ is $1$-smooth in terms of $\theta$.

This algorithm can be viewed as a prototypical algorithm and in the subsequent section, we will generalize to the online setting.
The pseudo-code is listed in Algorithm~\ref{alg:popu_NPG}.
\setlength{\textfloatsep}{0.2cm}
\setlength{\floatsep}{0.2cm}
\begin{algorithm}[t]
\begin{algorithmic}
    \caption{Population Two-Player NPG.} 
    \label{alg:popu_NPG}
    \REQUIRE $V_0= 0$ a value function.
    \ENSURE Approximate policy $x^K$ at Nash equilibrium
    \vspace{-1em}
    \FOR{$k=1,2,\cdots,K$}
        \STATE \textbf{Greedy Step:}
         \STATE Run Algorithm~\ref{alg:popu_MD} with $A_s$ defined in Eq.~\ref{eq_greedy_step} and returns $x^k(\cdot|s)$ for every state $s$.
         \STATE \textbf{Iteration Step:}
        \STATE Fix $x = x^k$, initialize $\theta=0$.
            \FOR{$t=0,1, \cdots,T-1 $}
                \STATE $\theta^{t+1} = \theta^t - \eta F_\sigma (\theta^{t})^\dag \nabla_{\theta} V^{x, f^t}(\sigma)$ \label{eq_popu_fisher_update}.
            \ENDFOR
        \STATE $V_{k} = V^{x, f^T}$.
    \ENDFOR
\end{algorithmic}
\end{algorithm}

\setlength{\textfloatsep}{0.2cm}
\setlength{\floatsep}{0.2cm}
\begin{algorithm}[t]
\begin{algorithmic}
    \caption{Subroutine: OMD for tabular case}
    \label{alg:popu_MD}
    \REQUIRE $f_0, g_0^\prime, x_0, y_0^\prime \in \text{Unif}(\gA)$, $\beta = \frac{1}{{T^\prime}^2}$, and $A_s$ for $s \in \gS$.
    \ENSURE Approximate optimal $\Bar{x_{T^\prime}}$ for max player
    \FOR{$t=1,2,\cdots,T^\prime$}
        \STATE \textbf{min player:}
        \STATE play $f_t(\cdot|s)$, observe $A_s^\top x_t(\cdot|s)$. Update:
         \vspace{-1em}
         \begin{gather*}
             g_t(i) \propto g_{t-1}^\prime (i) e^{ -\eta_t [x_t^\top A]_i },~
         g_t^\prime = (1-\beta)g_t + \frac{\beta}{|\gA|} \textbf{I},\\
          f_{t+1}(i) \propto g_t^\prime (i) e^{ -\eta_{t+1} [x_t^\top A]_i }
         \end{gather*}
         \vspace{-1.5em}
         \STATE \textbf{max player:}
        \STATE play $x_t(\cdot|s)$, observe $A_s f_t(\cdot|s)$. Update: \vspace{-1em}
         \begin{gather*}
             y_t(i) \propto y_{t-1}^\prime (i) e^{ -\eta_t^\prime [A f_t]_i },~
             y_t^\prime = (1-\beta)y_t + \frac{\beta}{|\gA|} \textbf{I},\\
         x_{t+1}(i) \propto y_t^\prime (i) e^{ -\eta_{t+1}^\prime [A f_t]_i } 
         \end{gather*}\vspace{-2em}
    \ENDFOR
\end{algorithmic}
\end{algorithm}

In Algorithm~\ref{alg:popu_NPG}, we perform $K$ outer loops and obtain a near-optimal $x$ and value function $V_K$. 
We note that this algorithm is asymmetric since our metric (Eq.~\ref{eq_metric}) is only considering max player $x$ while taking the best response of min player $f$. 

Each outer iteration begins with a \textbf{Greedy Step}. For current $V_{k-1}$, we aim to find approximate equilibrium ($x,f$) with which $\gT_{x, f} V \approx \gT V_{k-1}$.  This step is spiritually equivalent to finding minimax equilibrium of a matrix game for \emph{every state $s$}. In intuition, this step helps to update $V_{k-1}$ towards $V^*$ (cf. contraction Lemma).

Let us take a closer look at \textbf{Greedy Step}. Consider an approximate two-player zero-sum matrix game: for every state $s\in \gS$, we try to solve
\begin{gather} 
\label{eq_greedy_step}
    \max_{x(\cdot \mid s) \in \Delta(\gA)} \min_{f(\cdot \mid s) \in \Delta(\gA)} x^\top A_s f,\\
    A_s(a, b) = r(s, a, b) + \sum_{s^\prime} \gP(s^\prime \mid s, a, b) V_{k-1}(s^\prime). \notag
\end{gather}

\vspace{-1em}
Here $A_s$ represents a set of matrices related to current value function $V_{k-1}$. Instead of value-based approaches (PI~\citep{patek1997stochastic}, VI~\citep{shapley1953stochastic}) which are often  inefficient, we solve these matrix games by policy-based methods for efficiency and sub-optimality guarantee. We adopt the \emph{Optimistic Mirror Descent}~\citep{rakhlin2013optimization}  for two players by assuming access to population quantities, e.g., $A_s \pi$ in Eq.~\ref{eq_greedy_step}. Note that each $A_s(a,b) \in [0, \frac{1}{1-\gamma}]$ because $V_{k-1} \in [0, \frac{1}{1-\gamma})$.

For clarity, we follow notations in~\citep{rakhlin2013optimization}. Denote $\phi(f, x) = x^\top A_s f $ which is convex w.r.t. $f$ when fixing $x$ and concave w.r.t. $x$ when fixing $f$, and the domains for $x, f$ are $\gX, \gF$ respectively. Thus $\gT V_{k-1}(s) \coloneqq \sup_{x\in \gX} \inf_{f\in \gF} \phi(f,x)$. we denote $\{y_t\}$ and $\{g_t\}$ as secondary sequences of $\{x_t\}$ and $\{f_t\}$ respectively. We refer readers to Appendix~\ref{sec:pf_popu} for how we set the adaptive stepsizes $\eta_t$ and $\eta_t^\prime$. 

We perform simultaneous updates for $T^\prime$ iterations in Algorithm~\ref{alg:popu_MD} to minimize the following terms,
\begin{gather} \label{eq_MD}
    \frac{1}{T^\prime} \sum_{t=1}^{T^\prime} \phi(f_t, x_t) - \inf_f \sum_{t=1}^{T^\prime} \frac{1}{T^\prime} \phi(f, x_t),\\
          \frac{1}{T^\prime} \sum_{t=1}^{T^\prime} (-\phi(f_t, x_t)) - \inf_x \frac{1}{T^\prime} \sum_{t=1}^{T^\prime} (- \phi(f_t, x)).
  \end{gather}
Suppose two infs are achieved at $f^*$ and $x^*$ respectively. With these two inequalities, we can derive an upper bound of \textbf{Greedy Step}:
\begin{align*}
    \sup_{x \in \gX} \inf_{f \in \gF} \phi(f, x) - \inf_{f\in \gF} \phi(f, \Bar{x_{T^\prime}} )
\end{align*}
to guarantee $x^k$ is near-optimal with respect to $V_{k-1}$.


After obtaining $x^k$ from \textbf{Greedy Step}, the \textbf{Iteration Step} aims to evaluate the value function while fixing $x= x^k$. We run $T$ updates to find $f^* = \arg \min_{f} V^{x, f}$. In the competitive multi-agent RL literature, this step is equivalent to finding the \emph{best response} of min player (namely, $f^*$) when fixing $x = x^k$. The intuition is that when the max player's policy is very close to its optimal policy at NE and $f$ takes $f^*$, their accumulative value function is also close to $V^*$ at NE.
This step can be viewed as running NPG for a single-agent RL problem.

The following theorem gives the performance guarantee for Algorithm~\ref{alg:popu_NPG}. 

\begin{theorem} \label{thm:popu_NPG}
    For Algorithm~\ref{alg:popu_NPG}, set $\eta \ge (1-\gamma)^2 \log|\gA|$. After $K$ outer loops we have $V^*(\rho) - \inf_{f} V^{x^K, f}(\rho)$ upper bounded by
    \begin{align*}
       \widetilde{O}\left(\frac{ \gC_{\rho, \sigma}^{1, K, 0}}{(1-\gamma)^4 T}
        \!+ \!\frac{ \gC_{\rho, \sigma}^{0, K, 0} }{(1-\gamma)^4 T^\prime} \log{T^\prime}
        \!+ \!\frac{ \gamma^K}{1-\gamma} \gC_{\rho, \sigma}^{K, K+1, 0}\right).
    \end{align*}
\end{theorem}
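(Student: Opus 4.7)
The plan is to decompose the final suboptimality into three sources of error, one per algorithmic component, and then stitch them together via an error propagation argument for approximate value iteration in two-player Markov games in the style of \citet{perolat2015approximate}.

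First, I would set up the error propagation scheme. Let $V_k$ denote the value function produced after the $k$-th outer iteration and define the per-iteration Bellman residual $\epsilon_k = \gT V_{k-1} - V_k$, which itself splits into a \textbf{Greedy Step} error $\epsilon_k^{\mathrm{greedy}} = \gT V_{k-1} - \gT_{\pi_1^k} V_{k-1}$ (how far $\pi_1^k$ is from being greedy w.r.t.\ $V_{k-1}$) and an \textbf{Iteration Step} error $\epsilon_k^{\mathrm{iter}} = \gT_{\pi_1^k} V_{k-1} - V_k = \inf_{\pi_2} V^{\pi_1^k, \pi_2} - V^{\pi_1^k, \pi_2^T}$ (how far $\pi_2^T$ is from being the best response). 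Following \citet[Sec.~3]{perolat2015approximate}, the generalized contraction of $\gT$ together with a telescoping over $k=1,\dots,K$ yields a bound of the form
\begin{align*}
V^*(\rho) - \inf_{\pi_2} V^{\pi_1^K,\pi_2}(\rho) \lesssim \sum_{k=1}^{K} \gamma^{K-k}\, \bigl\|\epsilon_k^{\mathrm{greedy}}\bigr\|_{1,\mu_k} + \sum_{k=1}^{K} \gamma^{K-k}\, \bigl\|\epsilon_k^{\mathrm{iter}}\bigr\|_{1,\nu_k} + \frac{\gamma^K}{1-\gamma},
\end{align*}
where the $\mu_k,\nu_k$ are distributions obtained by pushing $\rho$ through products of transition kernels under various policies. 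Re-expressing these in terms of $\sigma$ using the concentrability definitions produces exactly the coefficients $\gC_{\rho,\sigma}^{1,K,0}$, $\gC_{\rho,\sigma}^{0,K,0}$, and $\gC_{\rho,\sigma}^{K,K+1,0}$, with the last one absorbing the $\gamma^K/(1-\gamma)$ initial-error term.

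Second, I would bound the Greedy Step error. Fixing a state $s$, the matrix game $\max_x \min_f x^\top A_s f$ with $A_s$ as in Eq.~\ref{eq_greedy_step} satisfies $\|A_s\|_\infty \le 1/(1-\gamma)$. Running the Optimistic Mirror Descent / RVU updates of Algorithm~\ref{alg:popu_MD} as analyzed by \citet{rakhlin2013optimization} for two players against each other gives a regret bound of order $\log(T')/(T')$ for each player (the $\log T'$ comes from the stabilizing mixing with $\beta = 1/T'^2$). Plugging into the sandwich inequality~\eqref{Eq_online_sandwich} with $\Bar{x_{T'}} = \pi_1^k(\cdot|s)$ converts this regret gap into a pointwise bound $\|\gT V_{k-1} - \gT_{\pi_1^k} V_{k-1}\|_\infty = O\!\bigl(\tfrac{\log T'}{(1-\gamma)T'}\bigr)$. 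One extra $(1-\gamma)^{-1}$ factor enters when I convert the $L_\infty$ bound on the Bellman residual to the $L_{1,\mu_k}$ bound above together with the concentrability change-of-measure, producing the $(1-\gamma)^{-4}$ scaling in the second term.

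Third, I would bound the Iteration Step error. With $\pi_1=\pi_1^k$ fixed, the inner NPG loop reduces to single-agent NPG for the min-player on an MDP with reward $-r$ and state distribution $\sigma$. Under softmax parameterization the update in line~\ref{eq_popu_fisher_update} becomes the exponentiated-advantage update, and by the mirror-descent analysis of \citet{agarwal2020optimality} (their Theorem~5.3) the choice $\eta \ge (1-\gamma)^2 \log|\gA|$ gives $V^{\pi_1^k,\pi_2^T}(\sigma) - \inf_{\pi_2} V^{\pi_1^k,\pi_2}(\sigma) = O\!\bigl(\tfrac{1}{(1-\gamma)^3 T}\bigr)$ when the starting-state distribution is $\sigma$. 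A second change of measure from $\sigma$ to the propagated distributions $\nu_k$ against $\rho$ yields the first term $\gC_{\rho,\sigma}^{1,K,0}/((1-\gamma)^4 T)$ after the telescoping sum, where the index shift from $0$ to $1$ on the concentrability coefficient is because the best-response error is incurred \emph{under} $\pi_1^k$ (one extra policy step appears in the trajectory).

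Finally I would combine everything: the $k$-th outer contribution to each term scales like $\gamma^{K-k}$, so summing gives a geometric series controlled by $(1-\gamma)^{-1}$, and rearranging the change-of-measure sums into the definitions of $\gC_{\rho,\sigma}^{1,K,0}$ and $\gC_{\rho,\sigma}^{0,K,0}$ delivers the stated rate. The main obstacle I anticipate is the bookkeeping in step one: carefully choosing the sequences of comparator policies inside $c_{\rho,\sigma}(j)$ so that the resulting coefficients match exactly the forms $\gC_{\rho,\sigma}^{l,k,d}$ in Definition~\ref{defn:concen}, especially tracking which of the two players' policies appears at which depth of the product kernel. A secondary subtlety is making the $L_\infty \to L_{1,\mu_k}$ conversion tight enough that no extra $|\gS|$ factors leak in, which I expect to handle by staying with $L_1$ norms throughout the propagation as in \cite{perolat2015approximate}.
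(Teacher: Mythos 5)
Your proposal is correct and follows essentially the same route as the paper: the error-propagation bound of \citet{perolat2015approximate} packaged into the coefficients $\gC_{\rho,\sigma}^{l,k,d}$, the Optimistic Mirror Descent regret analysis of \citet{rakhlin2013optimization} for the \textbf{Greedy Step}, and the single-agent NPG mirror-descent bound of \citet{agarwal2020optimality} for the \textbf{Iteration Step}. The only discrepancies are bookkeeping ones --- the paper's iteration-step lemma gives a per-step error of order $(1-\gamma)^{-2}T^{-1}$ multiplied by a propagation prefactor of order $(1-\gamma)^{-2}$ rather than your $(1-\gamma)^{-3}$ times $(1-\gamma)^{-1}$ split, and the iteration residual is defined via $(\gT_{\pi_1^k})^m V_{k-1}$ with $m\to\infty$ rather than a single Bellman backup --- neither of which affects the final rate.
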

\vspace{-1em}
We remind that $\sigma$ is the optimization measure we use to train the policy and $\rho$ is the performance measure of our interest.

Theorem~\ref{thm:popu_NPG} explicitly characterizes the performance of the output $x^K$ in terms of the number of iterations and the concentrability coefficients.
Viewing concentrability coefficients to be constants (which is the case when $\sigma$ is sufficiently diverse) and looking at the dependency on $T$ and $K$, we find the dependency on $T$ is a fast $1/T$ rate, matching the same rate in the single agent NPG analysis~\citep{agarwal2020optimality}.
The dependency on $K$ is exponential $(\gamma^K)$ which means we only need a few outer loops.
The first term has an $(1-\gamma)^{-4}$ dependency on the discount factor, which may not be tight and we leave it as a future work to improve.
In Theorem~\ref{thm:popu_NPG} and ~\ref{thm:popu_NPG_entropy}, we set $\eta$ to have a lower bound to simplify the convergence bounds. We can also derive $\eta$-dependent bounds. Note that the ``large step size" phenomenon is also consistent with the single-agent setting (see Theorem 5.3 in~\citep{agarwal2020optimality} and discussion therein).


The proof of Theorem~\ref{thm:popu_NPG} further requires the following parts: mirror-descent type analysis of NPG used in~\citep{agarwal2020optimality} and simultaneous mirror descent for matrix games proposed in~\citep{rakhlin2013optimization}.
The full proof is deferred to Appendix~\ref{popu_thm_pf}.

\subsection{Extension: Entropy regularization}
 \label{sec_ext_entropy}
 Following~\citep{cen2020fast}, we give an extension of entropy-regularized NPG in the \textbf{Iteration Step} for Algorithm~\ref{alg:popu_NPG}.
 Denote $\tau$ as the regularization term, the entropy regularized value function is formulated as
 \begin{align}
 V_\tau^{x, f}(\sigma) = V^{x, f}(\sigma) - \tau \gH(\sigma, f)
 \end{align}
 where $\gH(\sigma, f) = \frac{1}{1-\gamma} \E_{s \sim d_\sigma^{x, f}} \E_{b \sim f} \log{\frac{1}{f(b|s)}}$ is the entropy term w.r.t. min player $f$.
 Note that $V_\tau^{x, f}(s) \in \big{[} -\tau \log{|\gA|}, 1 \big{]}, \forall s \in \gS$.
 
 Entropy regularization requires us to minimize $V_\tau$ instead of original value function $V$. Denote $V_\tau^*(\sigma) = \min_{f} V_\tau^{x, f}(\sigma) = V^{x, f_\tau^*}(\sigma)- \tau \gH(\sigma, f_\tau^*)$, the following sandwich bound holds
 \begin{align*}
 V^{x, f_\tau^*}(\sigma) &\ge V^{x, f^*(x)}(\sigma) \ge V_\tau^{x, f^*(x)}(\sigma)\\
 &\ge V_\tau^*(\sigma) \ge V^{x, f_\tau^*}(\sigma) - \frac{\tau}{1-\gamma} \log{}|\gA|.
 \end{align*}
 Therefore, the regularized problem and the original problem are close when $\tau$ is small.
 
 \textbf{NPG methods with entropy regularization.} Let $\eta = \frac{1-\gamma}{\tau},$ we have the NPG update rule
 \begin{align*}
 &\theta^{t+1} \xleftarrow[]{} \theta^t - \eta F_\sigma(\theta^t)^\dag \nabla_{\theta} V_\tau^{x, f^t}(\sigma),\\
 &f^{t+1}(b|s) \propto \exp{\left(-\frac{1}{\tau} \sum_a x(a|s) Q_\tau^{x, f^t}(s, a, b)\right)}. 
 \end{align*}
 The following theorem shows the performance improvement over Theorem~\ref{thm:popu_NPG}.
 \begin{theorem} \label{thm:popu_NPG_entropy}
 For entropy regularized Algorithm~\ref{alg:popu_NPG}, after $K$ outer loops, one-sided measure $V^*(\rho) - \inf_{f} V^{x^K, f}(\rho)$ is bounded by 
 	\begin{align*}
  \widetilde{O} \left( \frac{ \gamma^ T \gC_{\rho, \sigma}^{1, K, 0}}{(1-\gamma)^2} \left\| \frac{\sigma}{\mu_\tau^*} \right\|_\infty + \frac{ \gC_{\rho, \sigma}^{0, K, 0} \log{T^\prime}}{(1-\gamma)^4 T^\prime} + \frac{ \gamma^K \gC_{\rho, \sigma}^{K, K+1, 0}}{1-\gamma} \right).
 \end{align*}
 \end{theorem}
 Here, $\mu_\tau^*$ is the one-sided \emph{stationary distribution} w.r.t. $x$ which satisfies: $\mu_\tau^* = d_{\mu_\tau^*}^{x, f_\tau^*}$. This argument indicates that the state visitation distribution remains unchanged if the initial state is already in a steady state. See Appendix~\ref{sec:pf_popu_entropy} for the full proof.
 
 Recently, ~\citet{perolat2021poincare} studied learning algorithms for extensive-form zero-sum games and they also used entropy regularization. Compared with their work, the differences include 1) we use policy optimization instead of value-based methods used in their paper. 2) our entropy regularization is a simple extension whereas the entropy regularization is crucial in~\citep{perolat2021poincare}: the regularization term gives strong convergence guarantees in monotone games.

\section{Online Algorithm with Function Approximation} 
\label{sec:online_algo}
In this section, we extend Algorithm~\ref{alg:popu_NPG} to the realistic online setting with function approximation, in which the parameterization we adopt is defined in Section~\ref{sec_FA}.
In this setting, we only observe samples (instead of the population quantities in Section~\ref{sec:population_algo}).
The pseudo-code is listed in Algorithm~\ref{alg:online_NPG}.

To obtain estimates of quantities, we adopt the episodic sampling oracle (cf. Section~\ref{sec:pre}) to provide transition tuples for estimating $A_s(a,b)$ in \textbf{Greedy Step} (cf. Eq.~\ref{eq_greedy_step}). This sampling oracle is also used in the \textbf{Iteration Step} to estimate value functions and gradients.
See Appendix~\ref{sec:pf_online} for more details about how we use the sampling oracle.

In Section~\ref{sec:rel}, we have pointed out that our algorithm has a smaller sample complexity of $O\left(\epsilon^{-6}\right)$ comparing to $O\left(\epsilon^{-12.5}\right)$~\citep{daskalakis2020independent}. As for the computational complexity, we remark that we only need projecting onto an $L_2$-norm ball in Algorithm~\ref{alg:online_NPG},~\ref{alg:online_MD}, which has the same time complexity as computing the gradient (linear in the dimension of $\theta$), so our algorithms are computationally efficient.

Now we describe our algorithm.
Specifically, we let $\xi$ and $\theta$ be parameters of $x$ and $f$, respectively. \footnote{We assume that the two players share the same parameter set only for clarity. We only need some minor modifications in the analysis to extend our results to the setting where two opposing players have different capabilities. Specifically, we only need to treat $W$ (norm-bound of updates), $D$ (regularity condition on features), and $\eta$ separately for each agent.}
 The output and motivation of the \textbf{Greedy Step} and the \textbf{Iteration Step} are analogous to those in Algorithm~\ref{alg:popu_NPG}. 
 In both steps, we need to take sample-based NPG updates which are forced to be constrained in a convex set $\mathcal{W} = \{w: \|w\|_2 \le W\}$ for analysis. From now, we denote $W$ as the bound of this norm-constrained convex set where each NPG update lies.

Again, we first discuss the \textbf{Greedy Step} whose pseudo-code is listed in Algorithm~\ref{alg:online_MD}.
Our goal is still to obtain a near-optimal $x^k$ with respect to $\gT V_{k-1}$.
Algorithm~\ref{alg:online_MD} is similar to Algorithm~\ref{alg:popu_MD} in spirit.
The main difference is that we use a sample-based NPG update rule for both $x$ and $f$.
Ideally, we wish to find simultaneous updates $w_f^*$ and $w_x^*$ for the players. Both are minimizers of quadratic loss
\vspace{-1em}
\begin{align*}
    w_f^* &= \arg \min_w
    \E_{s\sim \sigma}\E_{b \sim f^t(\cdot|s)}\\
    &\left( w^\top \nabla_\theta \log f_\theta^t(b|s)\! - \!\left[ (x_t^\top A_s)_b - \phi_s(f_t, x_t) \right] \right)^2.\\
    w_x^* &= \arg \min_w \E_{s \sim \sigma}\E_{a \sim x^t(\cdot|s)}\\
    &\left( w^\top \nabla_\xi \log x_\xi^t(a|s)\! - \!\left[ (A_s f_t)_a - \phi_s(f_t, x_t) \right] \right)^2.
\end{align*}
Then the updates take the form
$    \theta_{t+1} = \theta_t - \eta w_f^*$ and  $\quad\xi_{t+1} = \xi_t + \eta w_x^*.$
Along the way, the sampling oracle is used to approximate $w_f^*, w_x^*$. After $T^\prime$ iterations, we are able to output an approximate solution $\Bar{x_{T^\prime}}$ by averaging $\{x_t\}_{t=1}^{T'}$.

After obtaining $x^k$ from the \textbf{Greedy Step}, we adapt NPG updates (Eq.~\ref{eq_popu_fisher_update}) in Algorithm~\ref{alg:popu_NPG} to the online setting.
 
Denote $\nu^t=\nu_{\nu_0}^{x, f^t}$ for simplicity. Ideally, NPG update in the \textbf{Iteration Step} takes the form 
\begin{gather}
     w^t \in \arg \min \mathop{\E} \limits_{ \substack{s,a,b \sim \nu^t}} \!\left( w^\top\nabla_{\theta} \log{f(b|s)} \!-\! A^{x, f}(s,a,b) \right)^2, \notag\\
     \theta^{t+1} = \theta^t - \eta w^t. \label{sec5_update_rule}
\end{gather}
We perform sample-based quadratic loss minimization, which shares similarity with the former step: it takes $N$ steps of \emph{projected gradient descent} to return an approximate update.

Now we state our main theorem.
\setlength{\textfloatsep}{0.1cm}
\setlength{\floatsep}{0.1cm}
\begin{algorithm}[t]
\begin{algorithmic}
  \caption{Online Two-Player NPG}  
  \label{alg:online_NPG}  
    \REQUIRE  
          $V_0= 0$ value function
        \ENSURE  
          Approximate policy $x^K$ at NE
        \FOR{$k=1,2,\cdots,K$}
          \STATE \textbf{Greedy Step:}
           \STATE Run Algorithm~\ref{alg:online_MD} returns $x^k$ with $T^\prime$ iterations.  
    \STATE \textbf{Iteration Step:}
    \STATE Fix $x= x^k$, initialize $\theta^{(0)} = 0$.
    \FOR{$t=0,1, \cdots,T-1 $}
       \STATE Initialize $w_0=0$.
       \FOR{$n = 0,1, \cdots, N-1 $}
           \STATE Sample $s, a, b \sim \nu^t$, then obtain $\hat{Q}(s, a, b)$ using the sampling oracle.
           \STATE Sample $b^\prime \sim f^{t} (\cdot|s)$, observe:
           \STATE $g_n\! =\! \hat{Q}(\!s,\! a,\! b)\! \left(\nabla_{\!\theta}\! \log{f^{t}(b|s) }\! -\! \nabla_{\!\theta} \! \log{f^{t}(b^\prime|s)} \right)$.
           \vspace{-1em}
           \begin{align*}
               w_{n+1} &= \text{Proj}_{\gW} \big{[} w_n - \\
               &\! 2\alpha\! \left(w_n^\top \nabla_{\!\theta}\! \log{f^{t}(b|s)} \nabla_{\!\theta}\! \log{f^{t}(b|s)}\! -\! g_n \right) \big{]}.
           \end{align*} \vspace{-2em}
       \ENDFOR
       \STATE Set $\hat{w}^{t} = \frac{1}{N} \sum_{n=1}^N w_n$.
       \STATE Update $\theta^{(t+1)} = \theta^{(t)} - \eta \hat{w}^{t}$.
    \ENDFOR
    \STATE Randomly sample $f$ from $f^{t} (t= 0,1\cdots T-1$). 
    \STATE Denote $V_{k}$ for $V^{x, f}$.
    \ENDFOR
\end{algorithmic}  
\end{algorithm} 

\begin{algorithm}[t]
\begin{algorithmic}
    \caption{Online Greedy Step with Function-Approx}
    \label{alg:online_MD}
    \REQUIRE $\theta_1, \xi_1 = \textbf{0} \in \sR^d$
    \ENSURE $\Bar{x_{T^\prime}}$ as average of $\{x_t\}, t \in [T^\prime]$
    \FOR{$t=1,2,\cdots,T^\prime$}
        \STATE \textbf{min player:}
        Initialize $w_0 = 0$.
        \FOR{$n = 0, 1, 2\cdots N^\prime-1$}
         \STATE Sample $s \sim \sigma(s), a \sim x^t(\cdot|s), b \sim f^t(\cdot|s), s^\prime \sim \gP(\cdot|s,a,b), b^\prime \sim f^t(\cdot|s)$, observe:\\
         $g_n = [r(s,a,b)+\gamma V_{k-1}(s^\prime)]\cdot (\nabla_\theta \log{f^t(b|s)} - \nabla_\theta \log{f^t(b^\prime|s)})$.
\STATE Update: 
             $w_{n+1} = \text{Proj}_\gW [w_n - 2\alpha^\prime \cdot (w_n^\top \nabla_\theta \log{f^t(b|s)} \nabla_\theta \log{f^t(b|s)} - g_n)].$
        \ENDFOR
        \STATE $\hat{w}^t = \frac{1}{N^\prime} \sum_{n=1}^{N^\prime}w_n$.
        \STATE Update: $\theta_{t+1} = \theta_t - \eta^\prime \hat{w}^t$.
        \STATE \textbf{max player:}
        Initialize $w_0 = 0$.
        \FOR{$n = 0, 1, 2\cdots N^\prime-1$}
         \STATE Sample $s \sim \sigma(s), a \sim x^t(\cdot|s), b \sim f^t(\cdot|s), s^\prime \sim \gP(\cdot|s,a,b), a^\prime \sim x^t(\cdot|s)$, observe:\\
         $g_n = [r(s,a,b)+\gamma V_{k-1}(s^\prime)]\cdot (\nabla_\xi \log{x^t(a|s)} - \nabla_\xi \log{x^t(a^\prime|s)})$.
         \STATE Update: $
             w_{n+1} = \text{Proj}_\gW \Big{[}w_n - 2\alpha^\prime \cdot(w_n^\top \nabla_\xi \log{x^t(a|s)} \nabla_\xi \log{x^t(a|s)} - g_n)\Big{]}.
         $
        \ENDFOR
        \STATE $\hat{w}^t = \frac{1}{N^\prime}\sum_{n=1}^{N^\prime}w_n$.
        \STATE Update: $\xi_{t+1} = \xi_t + \eta^\prime \hat{w}^t$.
    \ENDFOR
\end{algorithmic}
\end{algorithm}

\begin{theorem} \label{thm:online_NPG}
For Algorithm~\ref{alg:online_NPG},
suppose in the \textbf{Greedy Step}: 
$\forall t \in [T^\prime -1], \inf_{s,a} x^t(a|s), \inf_{s,b} f^t(b|s) \ge \iota^2.$ Let $G = 4D( 2DW +\frac{2}{1-\gamma})$. Set $\eta=\sqrt{\frac{2\log{|\gA|}}{D^2 W^2 T}}, \eta^\prime=\sqrt{\frac{2\log{|\gA|}}{D^2 W^2 T^\prime}}, \alpha = \frac{W}{G\sqrt{N}}, \alpha^\prime = \frac{W}{G\sqrt{N^\prime}}$. After K outer loops, $\E \left[V^*(\rho) - \inf_{f} V^{x^K, f}(\rho) \right]$ is bounded by
    \begin{align*}
 \widetilde{\gO} \left( \frac{\gC_{\rho, \sigma}^{1, K, 0}}{(1-\gamma)^2} \epsilon + \frac{\gC_{\rho, \sigma}^{0, K, 0} }{(1-\gamma)^2} \epsilon^\prime + \frac{\gamma^K}{ 1-\gamma } \gC_{\rho, \sigma}^{K, K+1, 0}\right)
    \end{align*}
    where error terms $\epsilon, \epsilon^\prime$ are defined as
    \begin{align*}
        \epsilon &= \sqrt{\frac{\log{|\gA|}D^2 W^2}{T}} + \frac{|\gA|}{(1-\gamma)^2} \sqrt{\gC_{\sigma, \sigma}^\prime \frac{GW}{\sqrt{N}}} +\\
        &\quad \frac{|\gA|}{(1-\gamma)^2} \sqrt{\gC_{\sigma, \sigma}^\prime \cdot \epsilon_{approx} }\\
        \epsilon^\prime &= \sqrt{\frac{\log|\gA| D^2 W^2 }{T^\prime}} + \iota\left(\sqrt{\epsilon_{approx}^\prime} + \frac{\sqrt{G W}}{{N^\prime}^{\frac{1}{4}}}\right)
    \end{align*}
\end{theorem}
\vspace{-1em}
Here $\epsilon_{approx}$ and $\epsilon_{approx}^\prime$ are \emph{approximation errors} coming from \textbf{Greedy} and \textbf{Iteration Steps} (cf. Definition~\ref{defn:approx_error}). We remind that $D$ is a regularity condition on features, with which we could show log-linear parameterization is $D^2$-smooth (cf. Section~\ref{sec_FA}). See Appendix~\ref{sec:pf_online} for specific expressions.

Similarly, the exponential $\gamma^K$ in Theorem~\ref{thm:online_NPG} implies that we only need a few outer iterations. When considering concentrability coefficients as constants, the dependency on $T$ is a slower $T^{-\nicefrac{1}{2}}$ rate while the sampling efficiency takes a $N^{-\nicefrac{1}{4}}$ rate. Both match the rates in the sampling-based single-agent NPG analysis~\citep{agarwal2020optimality}. We note that iteration counts $T, T^\prime$ and sample counts $N, N^\prime$ have the same exponent. There is no explicit dependence on state-space $\gS$ in the theorem, hence our online algorithm proves nice guarantees for function approximation even in the infinite-state setting. Instead, the bounds have parametric representation-related terms: $D$ upper bounds feature norms $\|\phi_{s,a}\|$ and $W$ restricts each NPG update. The term $\iota$ bounds two policy probabilities from below and it must be greater than $0$ since we adopt log-linear parameterization. 
In spirit, $\iota$ is similar to concentrability coefficients which reflect the inherent dynamics of Markov games.

In the worst case, the concentrability coefficient scales as large as the number of states, and the bounds for function approximation are only meaningful in the benign case where the concentrability coefficient is small. However, we note that that, the dependency on concentrability is unavoidable: A hard example for the single-agent setting was given in~\citep{agarwal2020optimality}. Since our Markov-Game (MG) setting is a generalization of the single-agent setting, their hard example also applies to our setting. Moreover, we argue that  the coefficients can be small when there are some restrictions in the dynamics (see discussions in~\citep{chen2019information}). We also use the same definition as in the prior work value-based learning~\citep{perolat2015approximate}. The recent work~\citep{daskalakis2020independent} also assumed this structure of MGs to analyze policy-based methods.


\section{Conclusion}
\label{sec:conclusion}
This paper gave the first quantitative analysis of policy gradient methods for general two-player zero-sum Markov games with function approximation.
We quantified the performance gap of the output policy in terms of the number of iterations, number of samples, concentrability coefficients, and approximation error. 
An interesting direction is to extend our results to more advanced PG methods such as PPO~\citep{schulman2017proximal}.

\subsubsection*{Acknowledgements}
JDL acknowledges support of the ARO under MURI Award W911NF-11-1-0304,  the Sloan Research Fellowship, NSF CCF 2002272, NSF IIS 2107304,  and an ONR Young Investigator Award.
SSD acknowledges funding from NSF Award’s IIS-2110170 and DMS-2134106.

\bibliographystyle{plainnat}
\bibliography{references}

\onecolumn	
	\appendix
	
    \section{Basic results}
        In this section we provide some fundamental results for two-player zero-sum games and policy gradients. 

\begin{lemma}[contraction of Bellman operator]\label{contraction_bellman} We show $\gT_{x} v = \inf_{f} \gT_{x, f} v$ is a $\gamma$ contractor to $V^{x}$. Other forms of Bellman operators defined in Section~\ref{sec:pre} could be shown to hold contraction property with similar lines. 
\end{lemma}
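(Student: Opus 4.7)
The lemma bundles two assertions: (a) the $\gamma$-contraction $\|\gT_{\pi_1}u-\gT_{\pi_1}v\|_\infty\le \gamma\|u-v\|_\infty$ for all $u,v\in\sR^{|\gS|}$, and (b) that $V^{\pi_1}$ is the (unique) fixed point of $\gT_{\pi_1}$, so that $\|\gT_{\pi_1}^n v_0 - V^{\pi_1}\|_\infty$ tends to zero geometrically. My plan is to prove (a) directly and to derive (b) by observing that, with $\pi_1$ frozen, player two's subproblem is a standard single-agent discounted MDP, then invoking Banach's fixed point theorem.

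For (a), the only nontrivial point is commuting the infimum with the absolute value. I would use the elementary bound $|\inf_z h_1(z)-\inf_z h_2(z)|\le \sup_z|h_1(z)-h_2(z)|$ pointwise at each state $s$ with $h_i(\pi_2):=\gT_{\pi_1,\pi_2}v_i(s)$. The reward term cancels inside the difference, so this reduces to
\begin{equation*}
|\gT_{\pi_1}u(s)-\gT_{\pi_1}v(s)|\le \sup_{\pi_2}\bigl|\gamma\,\gP_{\pi_1,\pi_2}(u-v)(s)\bigr|\le \gamma\|u-v\|_\infty,
\end{equation*}
since each row $\gP_{\pi_1,\pi_2}(s,\cdot)$ is a probability distribution on $\gS$. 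Taking $\sup_s$ closes (a).

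For (b), I would fix $\pi_1$ and observe that player two then faces a standard single-agent discounted MDP on action set $\gA$ with induced transition $\bar{\gP}(s'\mid s,b)=\sum_a\pi_1(a\mid s)\gP(s'\mid s,a,b)$ and reward $\bar r(s,b)=\sum_a\pi_1(a\mid s)r(s,a,b)$, to be minimized. Because $\pi_2(\cdot\mid s)\mapsto \gT_{\pi_1,\pi_2}v(s)$ is linear on the simplex, the infimum is attained at a deterministic $b$, so $\gT_{\pi_1}$ coincides with the Bellman optimality operator of this derived MDP. Part (a) and Banach's theorem supply a unique fixed point $\widetilde V$; I would identify $\widetilde V=V^{\pi_1}$ by a two-sided bracketing argument. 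Pick $\pi_2^\dagger$ greedy with respect to $\widetilde V$ so that $\gT_{\pi_1,\pi_2^\dagger}\widetilde V=\gT_{\pi_1}\widetilde V=\widetilde V$; iterating the linear $\gamma$-contraction $\gT_{\pi_1,\pi_2^\dagger}$ yields $V^{\pi_1,\pi_2^\dagger}=\widetilde V$, whence $V^{\pi_1}\le \widetilde V$. Conversely, for any $\pi_2$ we have $\gT_{\pi_1,\pi_2}\widetilde V\ge\gT_{\pi_1}\widetilde V=\widetilde V$, and iterating $\gT_{\pi_1,\pi_2}$ gives $V^{\pi_1,\pi_2}\ge \widetilde V$, whence $V^{\pi_1}\ge \widetilde V$.

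The remaining operators follow the same template: $\gT_{\pi_1,\pi_2}$ is already linear with operator norm $\gamma$ in sup-norm, so (a) is immediate and $V^{\pi_1,\pi_2}$ is its fixed point by the linear Bellman equation; $\gT=\sup_{\pi_1}\inf_{\pi_2}\gT_{\pi_1,\pi_2}$ adds an outer $|\sup_{\pi_1}\cdot-\sup_{\pi_1}\cdot|\le\sup_{\pi_1}|\cdot|$ reduction on top of (a), with fixed-point identification piggybacking on the existence of a Nash equilibrium from \citet{filar2012competitive}. The main thing to watch is the bracketing step in (b): a pure monotonicity argument produces only one of the two inequalities, so the greedy construction of $\pi_2^\dagger$ from the fixed point is essential to close the loop.
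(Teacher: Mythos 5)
Your proposal is correct, and the contraction half is essentially the paper's argument: both of you reduce the difference of infima to a single policy's difference (you via the generic bound $|\inf_z h_1-\inf_z h_2|\le\sup_z|h_1-h_2|$, the paper by instantiating at the minimizer $\pi_2^*$ of one of the two infima), after which the reward cancels and the stochastic matrix $\gP_{\pi_1,\pi_2}$ gives the factor $\gamma$. Note the paper only establishes contraction toward the specific point $V^{\pi_1}$, whereas you prove the full two-argument contraction $\|\gT_{\pi_1}u-\gT_{\pi_1}v\|_\infty\le\gamma\|u-v\|_\infty$; the latter is what you need to invoke Banach, the former suffices for the paper because it identifies the fixed point separately. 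That identification is where the routes genuinely diverge: the paper simply asserts in one display that $V^{\pi_1}$ satisfies $V^{\pi_1}=\gT_{\pi_1}V^{\pi_1}$, i.e., that the best-response value function obeys the one-step optimality equation of the induced single-agent MDP — which is precisely the nontrivial content. Your two-sided bracketing (greedy $\pi_2^\dagger$ at the Banach fixed point $\widetilde V$ gives $V^{\pi_1}\le V^{\pi_1,\pi_2^\dagger}=\widetilde V$; monotonicity of each $\gT_{\pi_1,\pi_2}$ applied to $\gT_{\pi_1,\pi_2}\widetilde V\ge\widetilde V$ gives $V^{\pi_1,\pi_2}\ge\widetilde V$ for every $\pi_2$) actually proves that assertion rather than assuming it, and your closing remark is apt: monotonicity alone yields only one inequality, and the greedy construction is what closes the loop. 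The one caveat worth recording is that the bracketing identifies $\widetilde V$ with the infimum over \emph{stationary} policies $\pi_2$, which matches the paper's definition of $V^{\pi_1}$; nothing more is needed here.
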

\begin{proof}
 First we show $V^{x}$ is the unique fix point of $\gT_{x}$, this is because:
 \begin{align*}
     V^{x}(s) &= r (s, x(s), f^*(s)) + \gamma \sum_{s^\prime} \gP(s^\prime | s, x(s), f^*(s))V^{x}(s^\prime)\\
     &= \inf_{f} r (s, x(s), f(s)) + \gamma \sum_{s^\prime} \gP(s^\prime | s, x(s), f(s))V^{x}(s^\prime)\\
     &= \gT_{x} V^{x} (s)
 \end{align*}

Then for all function $v: \sR^{|\gS|} \to \sR^{|\gS|}$, 
\begin{align*}
    & \quad \left| \gT_{x}v (s) - \gT_{x} V^{x} (s)\right|\\
    &= \left| \inf_{f} \gT_{x, f}v (s) - \inf_{f} \gT_{x, f} V^{x}(s)\right|\\
    &= \max \left\{ \inf_{f} \gT_{x, f}v (s) - \inf_{f} \gT_{x, f} V^{x}(s), \inf_{f} \gT_{x, f} V^{x}(s) - \inf_{f} \gT_{x, f}v (s) \right\}
\end{align*}
Note that the first term could be upper bounded by
\begin{align*}
    & \quad \inf_{f} \gT_{x, f}v (s) - \inf_{f} \gT_{x, f} V^{x}(s)\\
    &= \inf_{f} \gT_{x, f}v (s) - \gT_{x, f^*} V^{x}(s)\\
    &\le \gT_{x, f^*}v (s) - \gT_{x, f^*} V^{x}(s)\\
    &= \gamma \sum_{s^\prime} \gP(s^\prime | s, x(s), f^*(s)) \left( v (s) - V^{x}(s)\right) \\
    &\le \gamma \left| v(s) - V^{x}(s) \right|
\end{align*}
The second term could be upper bounded similarly, hence we have
\begin{align*}
    \left| \gT_{x}v - \gT_{x} V^{x}\right| \le \gamma | v - V^{x}| 
\end{align*}
A direct application of contraction is $(\gT_{x})^\infty v = V^{x}$, which inspires the classical \emph{Value Iteration} algorithm~\citep{shapley1953stochastic}.
\end{proof}

To analyze our NPG algorithm based on approximate dynamic programming scheme, we introduce the following lemma to upper bounding global performance, which is very useful in other sections.
\begin{lemma} \label{original_form}
Let $\rho$ and $\sigma$ be distributions over states. With Algorithm~\ref{alg:popu_NPG}, after k iterations 
    \begin{align}
        V^*(\rho) - \inf_{f} V^{x^k, f}(\rho) &\le \frac{2(\gamma-\gamma^k) \gC_{\rho, \sigma}^{1, k, 0}}{(1-\gamma)^2}\epsilon + \frac{(1-\gamma^k)\gC_{\rho, \sigma}^{0, k, 0}}{(1-\gamma)^2}  \epsilon^\prime
        + \frac{ 2 \gamma^k \gC_{\rho, \sigma}^{k, k+1, 0}}{1-\gamma} ,
    \end{align}
    where
    \begin{gather*}
        \epsilon = \sup_{1 \le j \le k-1} \| \epsilon_j\|_{1, \sigma},\\
        \epsilon^\prime = \sup_{1 \le j \le k } \| \epsilon_j^\prime \|_{1,\sigma}.
    \end{gather*}
This Lemma could be directly extended to expectation form in Section~\ref{sec:pf_online}.
\end{lemma}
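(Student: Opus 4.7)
The plan is to adapt the approximate dynamic programming error--propagation scheme of \citet{perolat2015approximate} to Algorithm~\ref{alg:popu_NPG}, which produces two distinct sources of error at every outer iteration. First I would make explicit the per-iteration errors: let $\epsilon_j \coloneqq \gT V_{j-1} - \gT_{\pi_1^j} V_{j-1} \ge 0$ quantify the suboptimality of the Greedy Step at iteration $j$, and let $\epsilon_j' \coloneqq V_j - V^{\pi_1^j}$ quantify the residual of the Iteration Step (which approximates the min-player's best response against $\pi_1^j$). Under this convention, $\epsilon$ and $\epsilon'$ in the statement are the worst-case $\|\cdot\|_{1,\sigma}$-norms of these per-iteration errors, and $\inf_{\pi_2} V^{\pi_1^k,\pi_2}(\rho) = V^{\pi_1^k}(\rho)$, so it suffices to bound $V^* - V^{\pi_1^k}$.

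Second, I would derive a one-step recursive inequality for $V^* - V^{\pi_1^k}$. The key algebraic maneuver is to insert the pivots $\gT V_{k-1}$ and $\gT_{\pi_1^k} V_{k-1}$ and invoke the fixed-point identities $V^* = \gT_{\pi_1^*,\pi_2^*} V^*$ and $V^{\pi_1^k} = \gT_{\pi_1^k,\pi_2^*(\pi_1^k)} V^{\pi_1^k}$, combined with the $\gamma$-contraction of Lemma~\ref{contraction_bellman} and monotonicity of $\gT$ and $\gT_{\pi_1,\pi_2}$. The output has the schematic form
\begin{gather*}
V^* - V^{\pi_1^k} \;\le\; \gamma\,\gP^{(k)}\bigl(V^* - V^{\pi_1^{k-1}}\bigr) + \gP^{(k)}_{\epsilon}\epsilon_k + \gP^{(k)}_{\epsilon'}\epsilon_k',
\end{gather*}
where $\gP^{(k)}$, $\gP^{(k)}_{\epsilon}$, $\gP^{(k)}_{\epsilon'}$ are nonnegative combinations of stochastic kernels of the form $\gP_{\pi_1^i,\pi_2^i}$ built from the algorithm's policies together with $\pi_1^*$ and $\pi_2^*(\pi_1^k)$. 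Unrolling from $k$ down to $0$ gives the closed form
\begin{gather*}
V^* - V^{\pi_1^k} \;\le\; \sum_{j=1}^{k-1} \Gamma^{(k,j)}_1 \epsilon_j + \sum_{j=1}^{k} \Gamma^{(k,j)}_2 \epsilon_j' + \gamma^k\,\Gamma^{(k)}_3 \bigl(V^* - V_0\bigr),
\end{gather*}
where each $\Gamma$ is a $\gamma$-weighted sum of $j$-fold products of stochastic kernels of the above form.

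Finally, I would integrate against $\rho$ and invoke Definition~\ref{defn:concen}, which bounds
\begin{gather*}
\rho^\top \gP_{\pi_1^1,\pi_2^1}\cdots\gP_{\pi_1^j,\pi_2^j}\, g \;\le\; c_{\rho,\sigma}(j)\,\|g\|_{1,\sigma}
\end{gather*}
for nonnegative $g$. Using $\|V^* - V_0\|_\infty \le 1/(1-\gamma)$ on the residual term and matching the double-indexed geometric sums against the definitions of $\gC_{\rho,\sigma}^{1,k,0}$, $\gC_{\rho,\sigma}^{0,k,0}$, and $\gC_{\rho,\sigma}^{k,k+1,0}$ recovers the stated prefactors $\frac{2(\gamma-\gamma^k)}{(1-\gamma)^2}$, $\frac{1-\gamma^k}{(1-\gamma)^2}$, $\frac{2\gamma^k}{1-\gamma}$. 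The main obstacle I anticipate is the second step: because the max- and min-players have opposing optimal opponents, one must carefully choose which $\pi_2$ evaluates $\gT_{\pi_1}$ on each side of every substitution so that the inequalities remain one-sided and the resulting kernels stay nonnegative row-stochastic---this is precisely what distinguishes the two-player case from the single-agent NPG analysis, and it is where Perolat et al.'s bookkeeping pays off. Once the schematic recursion is justified, summing and reindexing into the $\gC$-coefficients is routine algebra.
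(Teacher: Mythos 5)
Your overall strategy is the one the paper uses, but the paper does not re-derive the propagation: it simply invokes \citet[Theorem~1]{perolat2015approximate} verbatim and specializes it with $p=1$, $q\to\infty$, $q'=1$, so that $c_q(j)$ becomes $c_{\rho,\sigma}(j)$, $\|l_k\|_{1,\rho}$ becomes the left-hand side of the lemma, and the initialization term is handled by $V_0=0$. Your plan to reconstruct the recursion from scratch is legitimate in principle, but as written it has three concrete problems. First, you have swapped the roles of $\epsilon_j$ and $\epsilon_j'$: in the paper (and in Perolat et al.'s scheme) the \emph{primed} quantity is the greedy error, $\gT V_{k-1}\le \gT_{\pi_1^k}V_{k-1}+\epsilon_k'$, and the unprimed $\epsilon_k$ is the evaluation residual $V_k-(\gT_{\pi_1^k})^m V_{k-1}$. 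This is not cosmetic: the two error types carry different prefactors ($2(\gamma-\gamma^k)\gC_{\rho,\sigma}^{1,k,0}$ versus $(1-\gamma^k)\gC_{\rho,\sigma}^{0,k,0}$) and different index ranges. Under your convention the greedy error would be supped only over $1\le j\le k-1$, omitting the greedy error at iteration $k$ --- yet that error is precisely what controls how suboptimal $\pi_1^k$ is with respect to $V_{k-1}$, so it cannot be dropped.

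Second, your proposed one-variable recursion $V^*-V^{\pi_1^k}\le\gamma\,\gP^{(k)}(V^*-V^{\pi_1^{k-1}})+\cdots$ does not close: $\pi_1^k$ is (approximately) greedy with respect to the stored iterate $V_{k-1}$, not with respect to $V^{\pi_1^{k-1}}$, and these differ by the evaluation residual. Perolat et al.\ must track two coupled quantities (essentially $d_k=V^*-V_k$ and $b_k=V_k-\gT_{\pi_1^{k+1}}V_k$) and only afterwards assemble the bound on $l_k=V^*-V^{\pi_1^k}$; this two-variable bookkeeping is exactly where the factor $2$ and the shifted index in $\gC_{\rho,\sigma}^{1,k,0}$ come from. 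Third, bounding the residual by $\|V^*-V_0\|_\infty\le 1/(1-\gamma)$ yields a third term of order $\gamma^k/(1-\gamma)^2$, which is a factor $1/(1-\gamma)$ worse than the stated $\frac{2\gamma^k}{1-\gamma}\gC_{\rho,\sigma}^{k,k+1,0}$. The paper instead exploits the $\min(\|d_0\|,\|b_0\|)$ freedom in Perolat et al.'s bound: since $V_0=0$ and $r\in[0,1]$, one has $\|V_0-\gT_{\pi_1^1}V_0\|_{1,\sigma}\le 1$, which gives the correct prefactor. Fixing these three points would bring your derivation in line with the statement.
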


This is a straightforward application of the following theorem.\\
\textbf{\citep[Theorem 1]{perolat2015approximate}}
    Let $\rho$ and $\sigma$ be distributions over states. Let p, q and $q^\prime$ be such that $\frac{1}{q} + \frac{1}{q^\prime} = 1$. Approximate Generalized Policy Iteration takes the following update: 
    
    \begin{gather}
        \gT V_{k-1} \le \gT_{x^k} V_{k-1} + \epsilon_k^\prime \label{GPI_greedy}\\
            V_k = \left( \gT_{x^k} \right)^m V_{k-1} + \epsilon_k \label{GPI_iter}
    \end{gather}

Then, after k iterations, we have:
\begin{align*}
    \|l_k\|_{p, \rho} &\le \frac{2(\gamma-\gamma^k) (\gC_{q}^{1, k, 0})^{\frac{1}{p}} }{(1-\gamma)^2} 
    \sup_{1 \le j \le k-1} \| \epsilon_j\|_{pq^\prime, \sigma},\\
    & + \frac{(1-\gamma^k) (\gC_{q}^{0, k, 0})^{\frac{1}{p}}}{(1-\gamma)^2}
    \sup_{1 \le j \le k} \| \epsilon_j^\prime\|_{pq^\prime, \sigma},\\
    &+ \frac{ 2 \gamma^k}{1-\gamma} (\gC_{q}^{k, k+1, 0})^{\frac{1}{p}} \min(\|d_0\|_{pq^\prime, \sigma}, \|b_0\|_{pq^\prime, \sigma}).
\end{align*}
where
\begin{align*}
   \gC_{q}^{l, k, d} &= \frac{(1-\gamma)^2}{\gamma^l-\gamma^k} \sum_{i=l}^{k-1} \sum_{j=i}^\infty   c_q(j+d)\\
    l_k &= V^*- \inf_{f} V^{x^k, f} \\
     b_k &= V_k - \gT_{x^{k+1}} V_k.
\end{align*} 
Note the generalized norm of \emph{Radon-Nikodym} derivative is:
\begin{equation*}
    c_q(j) = \sup_{\mu_1, \nu_1, \cdots, \mu_j, \nu_j} \left\| \frac{ d(\rho \gP_{\mu_1,\nu_1}\cdots \gP_{\mu_j,\nu_j}) }{d \sigma} \right\|_{q, \sigma}
\end{equation*}

 Now we make adaptation to this theorem.
\begin{proof}
Set norm order $p=1$, then let $q \to \infty, q^\prime=1$. Note that, in reinforcement learning, $\rho$ has an explicit meaning of measure distribution or distribution for testing, while $\sigma$ stands for exploration distribution. Normally exploration should cover more states, e.g., $\sigma$ is a uniform distribution over all actions. \\
Now we provide detailed calculations with these parameter settings.
\begin{align*}
    c_{q \to \infty }(j) &= 
    \sup_{x^1, f^1, \cdots x^j, f^j} \left\| \frac{\rho \gP_{x^1, f^1}  \cdots \gP_{x^j, f^j} }{ \sigma} \right\|_{q \to \infty, \sigma}\\
    &= \lim_{q \to \infty} \left( \sum_s \sigma(s) \left\| \frac{\rho \gP_{x^1, f^1}  \cdots \gP_{x^j, f^j} (s) }{ \sigma(s)} \right\|^q \right)^{\frac{1}{q}}\\
    &= \sup_{x^1, f^1, \cdots x^j, f^j} \left\| \frac{\rho \gP_{x^1, f^1}  \cdots \gP_{x^j, f^j} }{ \sigma} \right\|_{\infty}\\
    &= c_{\rho, \sigma}(j)
\end{align*}
As for weighted norm $\sigma$, it holds:
\begin{align*}
    \|l_k\|_{1, \rho} &= \sum_s \rho(s) (V^*(s) - \inf_{f} V^{x^k, f}(s) ) \\
                      &= V^*(\rho) - \inf_{f} V^{x^k, f}(\rho)
\end{align*}

Notice in practice $V_0 (s)$ is initialized to be 0, hence
\begin{align*}
    \| b_0\|_{1, \sigma} &= \sum_s \sigma(s) |b_0(s)|\\
                         &= \sum_s \sigma(s) | V_0(s) - \gT_{x^1} V_0(s) | \\
                         &\le \sup_{s,a,b} r(s,a,b)\\
                         &\le 1
\end{align*}
 which gives that $\min(\|l_0\|_{1,\sigma}, \|b_0\|_{1, \sigma}) \le 1$. Then proof is completed via substitution.
\end{proof}

 \begin{lemma}[Policy Gradient] \label{lem_policy_grad} Consider a two-player zero-sum Markov game, when $x$ is fixed, for $f$ it holds:
 \begin{align*}
     \nabla_{\theta} V^{x, f}(s_0) &= \frac{1}{1-\gamma} \mathbb{E}_{s\sim d_{s_0}^{x, f}} \mathbb{E}_{a \sim x(\cdot|s)} \mathbb{E}_{b \sim f(\cdot|s)} \nabla_{\theta}\log{f(b|s)} Q^{x,f}(s,a,b)\\
     &= \frac{1}{1-\gamma} \mathbb{E}_{s\sim d_{s_0}^{x, f}} \mathbb{E}_{a \sim x(\cdot|s)} \mathbb{E}_{b \sim f(\cdot|s)} \nabla_{\theta}\log{f(b|s)} A^{x,f}(s,a,b)
 \end{align*}
 \end{lemma}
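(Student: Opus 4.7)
The statement is the natural extension of the classical policy gradient theorem \citep{sutton2000policy} to the two-player zero-sum setting when $\pi_1$ is held fixed, so differentiation is only with respect to $\theta_2$. The plan is to unroll $V^{\pi_1,\pi_2}$ via the Bellman-style decomposition
\[
V^{\pi_1,\pi_2}(s) = \sum_{a,b} \pi_1(a|s)\,\pi_2(b|s)\,Q^{\pi_1,\pi_2}(s,a,b),
\]
apply $\nabla_{\theta_2}$, and use the log-derivative identity $\nabla_{\theta_2} \pi_2(b|s) = \pi_2(b|s) \nabla_{\theta_2} \log \pi_2(b|s)$. Since $\pi_1$ does not depend on $\theta_2$, the product rule yields
\[
\nabla_{\theta_2} V^{\pi_1,\pi_2}(s) = \underbrace{\sum_{a,b}\pi_1(a|s)\pi_2(b|s)\,\nabla_{\theta_2}\!\log\pi_2(b|s)\,Q^{\pi_1,\pi_2}(s,a,b)}_{\text{``immediate'' term}} + \sum_{a,b}\pi_1(a|s)\pi_2(b|s)\,\nabla_{\theta_2} Q^{\pi_1,\pi_2}(s,a,b).
\]

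Second, I would expand $\nabla_{\theta_2} Q^{\pi_1,\pi_2}(s,a,b)$. Since $r(s,a,b)$ is independent of $\theta_2$, only the discounted value of the next state contributes: $\nabla_{\theta_2} Q^{\pi_1,\pi_2}(s,a,b) = \gamma \sum_{s'} \mathcal{P}(s'|s,a,b)\,\nabla_{\theta_2} V^{\pi_1,\pi_2}(s')$. Substituting this recursion back into the identity for $\nabla_{\theta_2} V^{\pi_1,\pi_2}(s)$ and iterating gives a telescoping expansion in which at the $t$-th level the leading factor is $\gamma^t \operatorname{Pr}(s_t = s' \mid s_0=s,\pi_1,\pi_2)$. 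Summing the geometric series and recognizing the discounted state visitation
\[
d_{s_0}^{\pi_1,\pi_2}(s') = (1-\gamma)\sum_{t=0}^{\infty}\gamma^t \operatorname{Pr}(s_t = s' \mid s_0, \pi_1, \pi_2)
\]
yields the stated $Q$-form of the gradient with the $\tfrac{1}{1-\gamma}$ prefactor.

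Finally, for the advantage form I would subtract a state-dependent baseline. The key observation is that $V^{\pi_1,\pi_2}(s)$ does not depend on the sampled $b$, so for every fixed $s$
\[
\mathbb{E}_{b\sim\pi_2(\cdot|s)} \bigl[\nabla_{\theta_2}\!\log\pi_2(b|s)\bigr] = \nabla_{\theta_2}\sum_{b}\pi_2(b|s) = \nabla_{\theta_2}\, 1 = 0,
\]
which means adding or subtracting $V^{\pi_1,\pi_2}(s)$ inside the expectation leaves it unchanged. Replacing $Q^{\pi_1,\pi_2}(s,a,b)$ by $Q^{\pi_1,\pi_2}(s,a,b) - V^{\pi_1,\pi_2}(s) = A^{\pi_1,\pi_2}(s,a,b)$ then gives the second equality.

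The computation is entirely mechanical and single-agent in flavor once $\pi_1$ is frozen; there is no real obstacle beyond careful bookkeeping of the time index when turning the recursion into a discounted visitation measure, and verifying that the baseline-subtraction identity goes through because the gradient acts only on $\pi_2$ while $V^{\pi_1,\pi_2}(s)$ is constant under the $b$-expectation. I would probably present the argument first for the $Q$-form by induction on the depth of unrolling (bounding the tail by $\gamma^T \|\nabla V\|_\infty \to 0$ to justify passage to the infinite sum), and then derive the $A$-form as an immediate corollary.
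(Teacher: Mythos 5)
Your proposal is correct and follows essentially the same route as the paper's proof: expand $V^{\pi_1,\pi_2}$ over both action distributions, apply the product rule with the log-derivative identity (noting $\pi_1$ is independent of $\theta_2$), unroll the recursion through $\nabla_{\theta_2}Q$ into the discounted visitation measure, and obtain the advantage form from $\mathbb{E}_{b\sim\pi_2(\cdot|s)}\nabla_{\theta_2}\log\pi_2(b|s)=0$. Your added remark on justifying the passage to the infinite sum is a minor point of extra care the paper omits.
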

 
 \begin{proof}
 The proof is straightforward.
 \begin{align*}
   & \quad \nabla_{\theta} V^{x, f}(s_0) \\
     &= \nabla_{\theta} \left[ \sum_{a_0} x(a_0|s_0) \sum_{b_0} f(b_0|s_0) Q^{x,f}(s_0, a_0, b_0) \right]\\
     &= \sum_{b_0} \nabla_{\theta} f(b_0|s_0) \cdot \sum_{a_0} x(a_0|s_0) Q^{x,f}(s_0, a_0, b_0) + \mathbb{E}_{a_0} \mathbb{E}_{b_0} \nabla_{\theta} Q^{x, f}(s_0, a_0, b_0)\\
     &= \mathbb{E}_{a_0}\mathbb{E}_{b_0} \left[ \nabla_{\theta} \log{f(b_0|s_0)} Q^{x, f}(s_0, a_0, b_0) \right]
     + \gamma \mathbb{E}_{a_0}\mathbb{E}_{b_0}\mathbb{E}_{s_1} \nabla_{\theta} V^{x, f}(s_1)\\
     &= \mathbb{E}_{x, f} 
     \left[\sum_{t=0}^\infty \gamma^t \nabla_{\theta} \log{f(b_t|s_t)} Q^{x, f}(s_t, a_t, b_t) \right]\\
     &= \frac{1}{1-\gamma} \mathbb{E}_{s\sim d_{s_0}^{x, f}} \mathbb{E}_{a \sim x(\cdot|s)} \mathbb{E}_{b \sim f(\cdot|s)} \nabla_{\theta}\log{f(b|s)} Q^{x,f}(s,a,b)
 \end{align*}
 Notice that, when replacing $Q^{x, f}(s,a,b)$ in the final line with $A^{x, f}(s,a,b)$,
 \begin{align*}
     & \quad \mathbb{E}_{a \sim x(\cdot|s)} \mathbb{E}_{b \sim f(\cdot|s)} \nabla_{\theta}\log{f(b|s)} A^{x,f}(s,a,b)\\
     &= \mathbb{E}_{a \sim x(\cdot|s)} \mathbb{E}_{b \sim f(\cdot|s)} \nabla_{\theta} \log{f(b|s)} (Q^{x,f}(s,a,b) - V^{x, f}(s) )
 \end{align*}
 Note that $\E_{b\sim f(\cdot|s)} \nabla_{\theta} \log{f(b|s)} = 0$, then $V^{x, f}(s)$ term's influence is zero. Proof is completed. 
 \end{proof}

The distribution mismatch coefficient, which is often used for single-agent policy-based optimzation, is a weaker condition compared to concentrability coefficients, see \citep{scherrer2014approximate} for more discussion.
\begin{lemma}[distribution mismatch coefficient and concentrability coefficients] \label{lem_mismatch_coeff}
For any fix policy $x$ and its best response $f^*$, for infinite horizon, it holds
\begin{equation*}
    \left\| \frac{d_\sigma^{x, f^*}}{\sigma} \right\|_\infty \le \frac{1}{1-\gamma} \gC_{\sigma, \sigma}^\prime
\end{equation*}
\end{lemma}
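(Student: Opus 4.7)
The plan is to unpack the definition of the state visitation distribution and then bound each term in the resulting discounted sum by the concentrability coefficient $c_{\sigma,\sigma}(t)$, which is precisely the quantity defined to control such ratios uniformly over sequences of transition operators.

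First, I would expand
\begin{align*}
  d_\sigma^{\pi_1,\pi_2^*}
  = (1-\gamma)\sum_{t=0}^{\infty}\gamma^{t}\,\sigma\,\gP_{\pi_1,\pi_2^*}^{\,t},
\end{align*}
which follows directly from the definition of $d^{\pi_1,\pi_2}_{s_0}$ and then averaging over $s_0\sim\sigma$. Dividing by $\sigma$ pointwise (valid wherever $\sigma$ is positive, which is all we need for the $L^\infty$ bound) and taking $\|\cdot\|_\infty$, the triangle inequality yields
\begin{align*}
  \left\|\frac{d_\sigma^{\pi_1,\pi_2^*}}{\sigma}\right\|_\infty
  \le (1-\gamma)\sum_{t=0}^{\infty}\gamma^{t}
      \left\|\frac{\sigma\,\gP_{\pi_1,\pi_2^*}^{\,t}}{\sigma}\right\|_\infty.
\end{align*}

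Next, I would observe that specializing Definition~\ref{defn:concen} to the particular choice $\pi_1^1=\pi_1^2=\cdots=\pi_1$, $\pi_2^1=\pi_2^2=\cdots=\pi_2^*$ shows that each inner norm is upper bounded by $c_{\sigma,\sigma}(t)$ (and the $t=0$ term is trivially $1=c_{\sigma,\sigma}(0)$). Reindexing by $m=t+1$, I get
\begin{align*}
  \left\|\frac{d_\sigma^{\pi_1,\pi_2^*}}{\sigma}\right\|_\infty
  \le (1-\gamma)\sum_{m=1}^{\infty}\gamma^{m-1}c_{\sigma,\sigma}(m-1).
\end{align*}

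Finally, I would compare this with the target expression: unfolding the definition of $\gC_{\sigma,\sigma}'$ gives
\begin{align*}
  \frac{1}{1-\gamma}\gC_{\sigma,\sigma}'
  = (1-\gamma)\sum_{m\ge 1} m\gamma^{m-1}c_{\sigma,\sigma}(m-1),
\end{align*}
which dominates our bound term-by-term since $m\ge 1$ and the summands are non-negative, completing the proof. There is no real obstacle here; the only subtlety is remembering the $(1-\gamma)$ prefactor hidden inside the visitation distribution, which is what cancels one power of $(1-\gamma)$ against the $(1-\gamma)^2$ appearing in $\gC_{\sigma,\sigma}'$.
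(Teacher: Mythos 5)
Your proposal is correct and follows essentially the same route as the paper's own proof: expand the visitation distribution as a discounted sum of $\sigma\gP_{\pi_1,\pi_2^*}^{\,t}$, apply the triangle inequality, bound each term by $c_{\sigma,\sigma}(t)$ via the supremum in Definition~\ref{defn:concen}, and absorb the result into $\gC_{\sigma,\sigma}'$ using $m\ge 1$. Your write-up is in fact slightly more careful than the paper's (which contains a stray $\gamma^m$ inside the norm and compresses the reindexing step), but there is no substantive difference.
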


\begin{proof}
 The proof is straightforward, 
 \begin{align*}
     \left\| \frac{d_\sigma^{x, f^*}}{\sigma} \right\|_\infty &= (1-\gamma) \left\| \sum_{m \ge 0} \frac{\gamma^m \sigma (\gP^{x, f^*})^m }{\sigma} \right\|_\infty\\
     &\le (1-\gamma) \sum_{m \ge 0} \gamma^m \cdot \left\| \frac{\gamma^m \sigma (\gP^{x, f^*})^m }{\sigma} \right\|_\infty\\
     &\le (1-\gamma) \sum_{m \ge 1} m \gamma^{m-1} c_{\sigma, \sigma}(m-1)\\
     &\le \frac{\gC_{\sigma, \sigma}^\prime}{1-\gamma}
 \end{align*}
 Proof is completed.
\end{proof}
    
    \section{Proof for Section~\ref{sec:population_algo}}
    \label{sec:pf_popu}
    \textbf{[Proof sketch]}
Recall Approximate Value/Polity Iteration for zero-sum games~\citep{perolat2015approximate}, in each step $k = 1, 2, 3 \cdots$, 
\begin{align}
    & \gT V_{k-1} \le \gT_{x^k} V_{k-1} + \epsilon_k^\prime\\
    & V_k = (\gT_{x^k})^m V_{k-1} + \epsilon_k
\end{align}

where $m$ denotes the number of performing Bellman operators w.r.t. a fixed value-function $V_{k-1}$, and $\epsilon_k$ is tolerance term. Specifically, $m=1$ is Value Iteration. While $m \to \infty$, $(\gT_{x^k})^m V_{k-1} \to V^{x^k} = \inf_{f} V^{x^k, f}$ due to Bellman operator's contraction property (see Lemma~\ref{contraction_bellman}).

We discuss details to characterize error brought by two steps in each iteration, namely: $\epsilon_k^\prime$ and $\epsilon_k$ respectively.

\paragraph{Greedy Step}

Suppose two sequences of $\{f_t\}$ and $\{ x_t\}$ is given, let $\Bar{f_{T^\prime}} = \frac{1}{T^\prime} \sum_{t=1}^{T^\prime} f_t, \Bar{x_T^\prime} = \frac{1}{T^\prime} \sum_{t=1}^{T^\prime} x_t$, then
\begin{equation}
    \inf_f \frac{1}{T^\prime} \sum_{t=1}^{T^\prime} \phi(f, x_t) \le \inf_f \phi(f, \Bar{x_{T^\prime}}) \le \sup_x \inf_f \phi(f, x) \le \sup_x \phi(\Bar{f_{T^\prime}}, x) \le \sup_x \frac{1}{T^\prime} \sum_{t=1}^{T^\prime} \phi(f_t, x),
\end{equation}
where $\phi(f, x) = x^\top A f$ in this paper. Specifically, assume that two players produce sequences by using a regret minimization algorithm respectively, of which the bounds are
\begin{gather} \label{Eq_regrets}
    \frac{1}{T^\prime} \sum_{t=1}^{T^\prime} \phi(f_t, x_t) - \inf_f \sum_{t=1}^{T^\prime} \frac{1}{T^\prime} \phi(f, x_t) \le Rate(x_1, x_2,  \cdots x_{T^\prime})\\
    \frac{1}{T^\prime} \sum_{t=1}^{T^\prime} (-\phi(f_t, x_t)) - \inf_x \frac{1}{T^\prime} \sum_{t=1}^{T^\prime} (- \phi(f_t, x)) \le Rate(f_1, f_2,  \cdots f_{T^\prime})
  \end{gather}
Thus, an upper bound of \textbf{Greedy Step} could be derived
\begin{align*}
    \sup_x \inf_f \phi(f, x) - \inf_f \phi(f, \Bar{x_{T^\prime}} ) \le Rate(x_1, x_2,  \cdots x_{T^\prime}) + Rate(f_1, f_2,  \cdots f_{T^\prime})
\end{align*}
Only in this section we denote $(f^*, x^*)$ as the policy pair at NE for simplicity, i.e., $\sup_x \phi(f^*,x ) = \inf_f \sup_x \phi(f, x) ) = \inf_f \phi(f, x^*)$.
\begin{lemma}[Greedy step suboptimality.]\label{lem_popu_greedy_opt} In Algorithm~\ref{alg:popu_MD}, when both players adopt the following adaptive step sizes for each state $s \in \gS$,
\begin{gather*}
    \eta_t^s = \min \Bigg{\{} \frac{\log(|\gA|{T^\prime}^2)}{ \sqrt{\sum_{i=1}^{t-1} \|A_s^\top x_i(\cdot|s) - A_s^\top x_{i-1}(\cdot|s)\|_\star^2} + \sqrt{\sum_{i=1}^{t-2} \|A_s^\top x_i(\cdot|s) - A_s^\top x_{i-1}(\cdot|s) \|_\star^2} }, \frac{1}{1 + \frac{10}{(1-\gamma)^2}} \Bigg{\}},\\
    {\eta_t^s}^\prime = \min \Bigg{\{} \frac{\log(|\gA|{T^\prime}^2)}{ \sqrt{\sum_{i=1}^{t-1} \|A_s f_i(\cdot|s) - A_s f_{i-1}(\cdot|s)\|_\star^2} + \sqrt{\sum_{i=1}^{t-2} \|A_s f_i(\cdot|s) - A_s f_{i-1}(\cdot|s)\|_\star^2} }, \frac{1}{1 + \frac{10}{(1-\gamma)^2}} \Bigg{\}},
\end{gather*}
then pair $(\Bar{x_{T^\prime}}, \Bar{f_{T^\prime}})$ is an $\Tilde{O} \left( \frac{\log{|\gA| + \log}{T^\prime}}{(1-\gamma)^2 T^\prime} \right)-$ approximate minimax equilibrium.

\end{lemma}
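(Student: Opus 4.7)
The plan is to derive the claimed rate by invoking the Optimistic Mirror Descent (OMD) regret analysis of \citet{rakhlin2013optimization} for matrix games, specialized to the payoff matrix $A_s \in [0,1/(1-\gamma)]^{|\gA|\times|\gA|}$ defined in \eqref{eq_greedy_step}. The Algorithm~\ref{alg:popu_MD} updates are exactly OMD with entropy regularizer, where the secondary sequences $g_t^\prime, y_t^\prime$ play the role of the ``predictable'' base iterate and the subsequent update folds in the newly observed gradient — this is the standard ``optimistic'' correction. The smoothing step $g_t^\prime = (1-\beta)g_t + (\beta/|\gA|)\mathbf{I}$ with $\beta = 1/{T^\prime}^2$ keeps all iterates bounded away from the boundary of the simplex, which is needed to control local norms in KL-based OMD.

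First, I would write down the single-player OMD regret guarantee: for the min player with loss vectors $\ell_t := A_s^\top x_t$ and any comparator $f^* \in \Delta(\gA)$,
\begin{align*}
\sum_{t=1}^{T^\prime} \langle f_t - f^*,\, \ell_t\rangle \;\le\; \frac{\log(|\gA|{T^\prime}^2)}{\eta_{T^\prime}^s} + \sum_{t=1}^{T^\prime} \eta_t^s \|\ell_t - \ell_{t-1}\|_\infty^2 - \frac{1}{\eta_t^s}\,\mathrm{KL}(f_{t+1}\,\|\,g_t^\prime),
\end{align*}
and symmetrically for the max player with losses $-A_s f_t$. This is the standard RVU (Regret bounded by Variation in Utilities) inequality whose proof combines the Bregman three-point identity with the optimism correction; the smoothing step ensures $\log g_t^\prime$ is bounded, so initialization does not blow up the log-domain radius. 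The adaptive step size is chosen exactly so that the first two terms on the right are balanced up to a logarithmic factor, yielding a bound of order $\log(|\gA|T^\prime)\sqrt{\sum_t \|\ell_t-\ell_{t-1}\|_\infty^2}$ plus a cap coming from the $\tfrac{1}{1+10/(1-\gamma)^2}$ upper bound on $\eta_t$.

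Next comes the crucial \emph{stability} estimate: because the multiplicative-weights increment from $g_t^\prime$ to $f_{t+1}$ is exponential in $\eta_{t+1}^s \ell_t$ with $\|\ell_t\|_\infty \le 1/(1-\gamma)$, and because $\eta_t^s \le (1-\gamma)^2/(1-\gamma)^2 + 10) = O((1-\gamma)^2)$, the iterates are $\ell_1$-stable: $\|f_{t+1}-f_t\|_1 \lesssim \eta_t^s/(1-\gamma)$. Combined with $\|A_s^\top v\|_\infty \le \|A_s\|_\infty \|v\|_1 \le \|v\|_1/(1-\gamma)$, this gives $\|\ell_t - \ell_{t-1}\|_\infty \lesssim \eta_t^s/(1-\gamma)^2$. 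Plugging back into the RVU bound, the variation term $\sum_t \eta_t^s \|\ell_t-\ell_{t-1}\|_\infty^2$ gets absorbed (thanks to the negative $-\mathrm{KL}/\eta$ term and the step-size cap $\eta_t \lesssim (1-\gamma)^2$), leaving $\sum_t \langle f_t - f^*, \ell_t\rangle \le \tilde O\!\left(\tfrac{\log|\gA|}{(1-\gamma)^2}\right)$, independent of $T^\prime$ up to polylog factors.

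Finally, I would apply the same argument symmetrically to the max player and combine via the sandwich in \eqref{Eq_online_sandwich}: dividing the summed regrets by $T^\prime$ gives
\begin{align*}
\sup_x \phi(\bar f_{T^\prime}, x) - \inf_f \phi(f, \bar x_{T^\prime}) \;=\; \widetilde O\!\left(\frac{\log|\gA| + \log T^\prime}{(1-\gamma)^2\, T^\prime}\right),
\end{align*}
which is exactly the claimed approximate-NE rate. The main obstacle I anticipate is the bookkeeping around the adaptive step size: one must verify that the ``doubling-trick''-style denominator (sum of two consecutive square roots of path-length, truncated by the $(1-\gamma)^2$ cap) indeed yields a telescoping bound on the variation term, and that the cap is active only finitely often — otherwise the rate degrades to $1/\sqrt{T^\prime}$. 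Handling the interplay between the optimistic cancellation, the $1/(1-\gamma)$ scaling of the matrix entries, and the uniform-mixing smoothing $\beta = 1/{T^\prime}^2$ is the delicate part; everything else is a direct specialization of known OMD results.
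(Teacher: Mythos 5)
Your proposal follows essentially the same route as the paper's proof: both specialize the optimistic mirror descent analysis of \citet{rakhlin2013optimization} to the matrix game with payoffs in $[0,\tfrac{1}{1-\gamma}]$, use the RVU-type regret inequality with the adaptive step sizes and the $\beta=1/{T^\prime}^2$ smoothing to get $R_{\max}^2=\log(|\gA|{T^\prime}^2)$, cancel the variation terms against the negative stability terms using the $\tfrac{1}{1+10/(1-\gamma)^2}$ cap, and convert the summed regrets into a duality-gap bound via the sandwich inequality~\ref{Eq_online_sandwich}. The only caution is that the absorption must come from the cross-player cancellation with the negative Bregman terms (as you note parenthetically), not from the direct $\ell_1$-stability estimate alone, which by itself would only give a bound growing linearly in $T^\prime$.
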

\begin{proof} Proof is modified from~\citep{rakhlin2013optimization}. Regret minimization procedure in Eq.~\ref{Eq_regrets} is calculated as:
\begin{align*}
    & \quad \sum_{t=1}^{T^\prime} \sup_x \phi(f_t, x) - \sup_x \phi(f^*, x)\\
    &\le \sum_{t=1}^{T^\prime}\left \langle f_t - f^*, \nabla_f \left(\sup_x \phi(f_t, x)\right) \right \rangle \\\
    &\le \left( \frac{1}{\eta_1} \right)R_{max}^2 + \sum_{t=1}^{T^\prime} \| A^\top x_t - A^\top x_{t-1} \|_\star \| g_t -f_t\|\\
    & - \frac{1}{2} \sum_{t=1}^{T^\prime} \frac{1}{\eta_t} (\|g_t^\prime - f_t\|^2 + \|g_{t-1}^\prime - f_t \|) + 1,
\end{align*}
where $R_{max}^2$ is upper bound of KL divergence between $f^*$ and any $g^\prime$, so $R_{max}^2 = \log(|\gA|{T^\prime}^2).$ With some calculations, the sum of two regrets (Eq.~\ref{Eq_regrets} and its counterpart) is upper bounded by
\begin{align}
    6 + \left( 4 + \frac{40}{(1-\gamma)^2}\right) \log(|\gA| {T^\prime}^2) + \frac{1}{T^\prime} \frac{40}{(1-\gamma)^2}
\end{align}
Thus regret is upper bounded by $\gO \left( \frac{\log{|\gA|} + \log{{T^\prime}}}{(1-\gamma)^2 T^\prime} \right)$.

\end{proof}

\paragraph{Iteration Step}
While Approximate Value-based algorithms generally focus on the relation between $\epsilon_k$ and accumulative error of $\epsilon_{k, i}, i = 1,2,3 \cdots m$. We could take another view of this iteration: at $k^{th}$ iteration, let $V^{x^k}$ be our goal to achieve, $V_k$ is what we finally get with optimization techniques, and $\epsilon_k$ now turns out to be a suboptimality gap.
\begin{align*}
    \sum_s |\epsilon_k(s)| \sigma(s) =  V^{x, f^{T}}(\sigma) - \inf_{f} V^{x^k, f} (\sigma)
\end{align*}
Describe this with general policy-based languages: when we are given $x^k$, we desire to find $\inf_{f} V^{x^k, f}$. Note that it holds similarity to the general single-agent MDP, where the agent seeks to find $\max_{\pi} V^{\pi}$. Thus we could apply NPG for player two at iteration step, next we show NPG update in Algorithm~\ref{alg:popu_NPG} takes exponential form on the weighted advantage function.

\begin{lemma} Fix $x$, when $f$ is softmax parameterized, it holds
\begin{align*}
    f^{t+1} \propto f^{t} \cdot \exp^{- \frac{\eta}{1-\gamma} \sum_a x(a|s) A^{x, f}(s,a,b)}
\end{align*}
\end{lemma}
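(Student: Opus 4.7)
The plan is to exploit the block structure that tabular softmax induces on the Fisher matrix and to work directly with the normal-equation characterization of the NPG step. First I would recall that for softmax parameterization, $\nabla_{\theta_{s',b'}} \log \pi_2(b \mid s) = \mathbf{1}[s'=s]\bigl(\mathbf{1}[b'=b] - \pi_2(b' \mid s)\bigr)$, so $F_\sigma(\theta_2)$ is block-diagonal across states, with the block at state $s$ equal to $d_\sigma^{\pi_1,\pi_2}(s)\bigl(\mathrm{diag}(\pi_2(\cdot\mid s)) - \pi_2(\cdot\mid s)\pi_2(\cdot\mid s)^\top\bigr)$, whose null space is spanned by the all-ones vector $\mathbf{1}$.

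Next, combining Lemma~\ref{lem_policy_grad} with the above expression for $\nabla\log\pi_2$, a direct calculation gives the $(s,b)$-component of the policy gradient:
\begin{align*}
    [\nabla_{\theta_2} V^{\pi_1,\pi_2}(\sigma)]_{s,b} = \frac{1}{1-\gamma}\, d_\sigma^{\pi_1,\pi_2}(s)\, \pi_2(b\mid s) \sum_a \pi_1(a\mid s)\, A^{\pi_1,\pi_2}(s,a,b),
\end{align*}
where the contribution from the $-\pi_2(b'\mid s)$ piece collapses because $\mathbb{E}_{a\sim\pi_1,\,b'\sim\pi_2}[A^{\pi_1,\pi_2}(s,a,b')] = 0$.

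The third step is to solve, blockwise, the system $F_\sigma(\theta_2)\, w = \nabla_{\theta_2} V^{\pi_1,\pi_2}(\sigma)$; any particular solution is enough because the Moore--Penrose pseudoinverse only selects a distinguished minimum-norm representative from a coset, and the per-state gauge freedom lies entirely in $\mathrm{span}(\mathbf{1})$. Cancelling the common factor $d_\sigma^{\pi_1,\pi_2}(s)\,\pi_2(b\mid s)$ reduces the $s$-block equation to $w_{s,b} - \mathbb{E}_{b'\sim\pi_2(\cdot\mid s)}[w_{s,b'}] = \tfrac{1}{1-\gamma}\sum_a \pi_1(a\mid s) A^{\pi_1,\pi_2}(s,a,b)$, so every solution has the form $w_{s,b} = \tfrac{1}{1-\gamma}\sum_a \pi_1(a\mid s) A^{\pi_1,\pi_2}(s,a,b) + c_s$ for some state-dependent shift $c_s$.

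Finally, substituting into the coordinatewise update $(\theta_2^{t+1})_{s,b} = (\theta_2^t)_{s,b} - \eta\, w_{s,b}$ and renormalizing the softmax yields $\pi_2^{t+1}(b\mid s) \propto \pi_2^t(b\mid s)\exp(-\eta\, w_{s,b})$, at which point the factor $e^{-\eta c_s}$ depends on $s$ but not on $b$ and is absorbed into the proportionality constant, leaving the claimed exponential form. The only subtlety I expect is precisely this per-state gauge ambiguity arising from the $\mathbf{1}$-direction in each Fisher block, which is why the identity is stated only up to normalization rather than as an equality of logits; once that is handled, the rest is bookkeeping with the softmax gradient and a single application of the policy gradient theorem.
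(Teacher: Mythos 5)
Your proof is correct and follows essentially the same route as the paper: both arguments reduce to showing that $F_\sigma(\theta_2)^\dag \nabla_{\theta_2} V^{\pi_1,\pi_2}(\sigma)$ equals $\frac{1}{1-\gamma}\sum_a \pi_1(a|s)A^{\pi_1,\pi_2}(s,a,b)$ plus a per-state constant that is absorbed by the softmax normalization, using the fact that the tabular softmax score function represents exactly the per-state mean-zero functions. The only cosmetic difference is that the paper certifies this by exhibiting a zero-loss minimizer of the compatible-function-approximation objective $L(w)$, whereas you solve the block-diagonal normal equations explicitly and track the $\mathrm{span}(\mathbf{1})$ gauge freedom directly, which is if anything slightly more careful about the pseudoinverse.
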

\begin{proof}
Notice that Fisher matrix calculation for this case is often obtained by minimizing
\begin{equation*}
    L(w) = \E_{s \sim d_\sigma^{x, f}} \E_{b\sim f} \left(w^\top \nabla_{\theta} \log{f(b|s) - \sum_a x(a|s) A^{x, f}(s,a,b)}\right)^2,
\end{equation*}
which is because:\\
At minimizer $w^*$, $\frac{d L(w^*)}{dw} = 0$ implies that
\begin{align*}
    \E_{s \sim d_\sigma^{x, f}} \E_{b\sim f} \left( (w^*)^\top  \nabla_{\theta} \log{f(b|s) - \sum_a x(a|s) A^{x, f}(s,a,b)} \right) \nabla_{\theta} \log{f(b|s)} =0,
\end{align*}
rearrange this,
\begin{equation*}
    w^* = (1-\gamma) F_{\sigma}(\theta)^\dag \nabla_{\theta} V(\sigma)
\end{equation*}
For softmax parameterization, note: 
\begin{equation*}
    w^* = \sum_a x(a|s) A^{x, f}(s,a,b) + v(s) \Leftrightarrow L(w^*)=0
\end{equation*}

Then NPG updates take the following form,
\begin{gather*}
\theta^{t+1} = \theta^t - \frac{\eta}{1-\gamma} \sum_a x(a|s) A^{x, f}(s,a,b) - \frac{\eta}{1-\gamma} v\\
    f^{t+1} \propto f^{t} \cdot \exp^{- \frac{\eta}{1-\gamma} \sum_a x(a|s) A^{x, f}(s,a,b)}
\end{gather*}
Proof is completed.
\end{proof}

\begin{lemma} With above update rule, after T iterations
\begin{align*}
   V^{x, f^{T}}(\sigma) - V^{x, f^*}(\sigma) \le  \frac{\log{|\gA|}}{\eta T} + \frac{1}{(1-\gamma)^2 T},
\end{align*}
where $f^*$ is player two's best response w.r.t. $x$, which satisfies $V^{x, f^*} (\sigma) = \inf_{f} V^{x, f}(\sigma)$
\end{lemma}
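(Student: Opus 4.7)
The plan is to adapt the single-agent softmax NPG analysis of \citet{agarwal2020optimality} to this setting, exploiting the fact that the \textbf{Iteration Step} operates on a frozen $\pi_1$, so the min-player effectively faces a stationary single-agent MDP with induced reward. First I will take the exponential update rule established in the previous lemma,
\[
\pi_2^{t+1}(b|s)\propto \pi_2^t(b|s)\exp\!\Bigl(-\tfrac{\eta}{1-\gamma}\bar A_t(s,b)\Bigr),\quad \bar A_t(s,b)\coloneqq\sum_a\pi_1(a|s)A^{\pi_1,\pi_2^t}(s,a,b),
\]
and differentiate the per-state KL divergences to obtain the identity
\[
\KL(\pi_2^*\|\pi_2^t)(s)-\KL(\pi_2^*\|\pi_2^{t+1})(s)=\tfrac{\eta}{1-\gamma}\,\E_{b\sim\pi_2^*(\cdot|s)}\bar A_t(s,b)+\log Z_t(s),
\]
where $Z_t(s)$ is the partition function of the update. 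Averaging over $s\sim d_\sigma^{\pi_1,\pi_2^*}$ and invoking the performance difference lemma for the min-player (a direct corollary of Lemma~\ref{lem_policy_grad}) converts the advantage expectation into a value gap, giving
\[
\eta\bigl[V^{\pi_1,\pi_2^t}(\sigma)-V^{\pi_1,\pi_2^*}(\sigma)\bigr]=\E_{s\sim d_\sigma^{\pi_1,\pi_2^*}}\!\bigl[\KL(\pi_2^*\|\pi_2^t)(s)-\KL(\pi_2^*\|\pi_2^{t+1})(s)+\log Z_t(s)\bigr].
\]

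Next I will establish the per-state log-partition bound $\log Z_t(s)\le\tfrac{\eta}{1-\gamma}\bigl[V^{\pi_1,\pi_2^t}(s)-V^{\pi_1,\pi_2^{t+1}}(s)\bigr]$. Rearranging the definition of $\pi_2^{t+1}$ yields $\log Z_t(s)=-\tfrac{\eta}{1-\gamma}\E_{b\sim\pi_2^{t+1}}\bar A_t(s,b)-\KL(\pi_2^{t+1}\|\pi_2^t)(s)\le-\tfrac{\eta}{1-\gamma}\E_{b\sim\pi_2^{t+1}}\bar A_t(s,b)$. The KL-regularized optimality characterization of the NPG update gives $\E_{b\sim\pi_2^{t+1}}\bar A_t(s',b)\le 0$ for every $s'$; combining this non-positivity with the performance difference lemma applied between $\pi_2^{t+1}$ and $\pi_2^t$, and using $d_s^{\pi_1,\pi_2^{t+1}}(s)\ge 1-\gamma$ to isolate the self-loop term in the discounted occupancy, produces $\E_{b\sim\pi_2^{t+1}}\bar A_t(s,b)\ge V^{\pi_1,\pi_2^{t+1}}(s)-V^{\pi_1,\pi_2^t}(s)$, which is the desired bound and also yields state-wise monotone improvement $V^{\pi_1,\pi_2^{t+1}}(s)\le V^{\pi_1,\pi_2^t}(s)$ as a byproduct.

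Finally I will telescope: summing the displayed identity over $t=0,\ldots,T-1$ collapses the KL terms to $\E_s\KL(\pi_2^*\|\pi_2^0)(s)\le\log|\gA|$ since $\theta_2^0=0$ makes $\pi_2^0$ uniform, and the log-partition terms telescope via the per-state bound to $\tfrac{\eta}{1-\gamma}\E_s[V^{\pi_1,\pi_2^0}(s)-V^{\pi_1,\pi_2^T}(s)]\le\tfrac{\eta}{(1-\gamma)^2}$ using $V\in[0,1/(1-\gamma)]$. Dividing by $\eta T$ and invoking monotone improvement to replace the arithmetic mean by the terminal iterate $V^{\pi_1,\pi_2^T}(\sigma)-V^{\pi_1,\pi_2^*}(\sigma)$ delivers the claimed rate.

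The main obstacle is the per-state log-partition inequality: one must re-derive the mirror-descent "soft improvement" step using the $\pi_1$-averaged advantage $\bar A_t$ rather than the usual single-agent advantage, and verify that the NPG iterate still uniquely minimizes a KL-regularized linear functional so that $\E_{b\sim\pi_2^{t+1}}\bar A_t(s,b)\le-\tfrac{1-\gamma}{\eta}\KL(\pi_2^{t+1}\|\pi_2^t)(s)\le 0$. Once this is in hand the rest of the argument is a direct translation of the Agarwal et al.\ telescoping.
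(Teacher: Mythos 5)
Your proposal is correct and takes essentially the same route as the paper, which simply defers to the single-agent softmax NPG analysis of \citet{agarwal2020optimality} (Theorem 5.3) with $A^{\pi}(s,a)$ replaced by the $\pi_1$-averaged advantage $\sum_a\pi_1(a|s)A^{\pi_1,\pi_2^t}(s,a,b)$; you have in fact spelled out the KL-telescoping, log-partition bound, and monotone-improvement steps that the paper leaves implicit. One small slip: the signs in your first KL-difference display are flipped (it should read $-\tfrac{\eta}{1-\gamma}\E_{b\sim\pi_2^*}\bar A_t(s,b)-\log Z_t(s)$ with $Z_t$ the partition function of the descent update), but the identity you actually use in the next line is stated correctly, so nothing downstream is affected.
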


\begin{proof}
Proof sketch is analogous to \citet[Theorem 5.3]{agarwal2020optimality}, only need to replace $A^\pi(s,a)$ with an average term $\sum_a x(a|s) A^{x, f}(s,a,b)$.
\end{proof}

~\\
With these policy optimization results, the proof of Theorem~\ref{thm:popu_NPG} is concise.

\textbf{[Proof for Theorem~\ref{thm:popu_NPG}]} \label{popu_thm_pf}
\begin{proof} After T steps of NPG descent, it can be guaranteed that
\begin{align*}
    \epsilon_k(s) &= V^{x^k, f^T}(s) - \inf_{f} V^{x^k, f} > 0 \\
    \sum_s |\epsilon_k(s)| \sigma(s) &=  V^{x, f^{T}}(\sigma) - \inf_{f} V^{x^k, f} (\sigma) \le \frac{2}{(1-\gamma)^2 T},
\end{align*}
 where we have set $\eta \ge (1-\gamma)^2 \log|\gA|$. Substitute this NPG suboptimality and Greedy step suboptimality (Lemma~\ref{lem_popu_greedy_opt}) into Lemma~\ref{original_form}, proof is completed.
 \end{proof}
 
 \vspace{4em}

 \subsection{Proof for Entropy regularization} \label{sec:pf_popu_entropy}
  First, note that the entropy term w.r.t. min player $f$
 \begin{gather*}
 \gH(\sigma, f) = \frac{1}{1-\gamma} \E_{s \sim d_\sigma^{x, f}} \E_{b \sim f} \log{\frac{1}{f(b|s)}}
 \end{gather*}
 lies in $\big{[} 0, \log{|\gA|} \big{]}$. Recall that $V_\tau^*(\sigma) = \min_{f} V_\tau^{x, f}(\sigma) = V^{x, f_\tau^*}(\sigma)- \tau \gH(\sigma, f_\tau^*)$ and the following sandwich bound holds:
 \begin{align*}
 V^{x, f_\tau^*}(\sigma) &\ge V^{x, f^*(x)}(\sigma) \ge V_\tau^{x, f^*(x)}(\sigma) \ge V_\tau^*(\sigma) \ge V^{x, f_\tau^*}(\sigma) - \frac{\tau}{1-\gamma} \log{}|\gA|,
 \end{align*}

 We aim to bound $V^{x, f^T}(\sigma) -V^{x, f^*(x)}$ through optimizing $V_\tau$, denote $V_\tau^* = V^{x, f^*(x)}$ for short, observe
 \begin{align*}
 V^{x, f^T}(\sigma) - V^{x, f^*(x)}(\sigma) &= V^{x, f^T}(\sigma) - V_\tau^{x, f^T}(\sigma) + V_\tau^{x, f^T}(\sigma) - V_\tau^*(\sigma) \\
 &\le \frac{\tau}{1-\gamma} \log{|\gA|} + V_\tau^{x, f^T}(\sigma) - V_\tau^*(\sigma) + 0.
 \end{align*}
 
 Besides the notations in Section~\ref{sec:population_algo}, introduce the regularized advantage function for min player $f$
 \begin{align*}
     A_\tau^{x, f}(s, a, b) = Q_\tau^{x, f}(s, a, b) + \tau \log{f(b|s)} - V_\tau^{x, f}(s)
 \end{align*}
 Regularized reward is
 \begin{align*}
     r_\tau(s, a, b) = r(s, a, b) + \tau \log{f(b|s)}
 \end{align*}
 \begin{lemma}[Regularized policy gradients]
 \begin{align*}
     \nabla_{\theta} V_\tau^{x, f}(s_0) = \frac{1}{1-\gamma} \E_{s \sim d_{s_0}^{x, f}} \E_{a \sim x} \E_{b \sim f} \nabla_{\theta} \log{f(b|s)} A_\tau^{x, f}(s, a, b)
 \end{align*}
 \begin{proof}
 Note that soft Q function is $Q_\tau^{x, f}(a, a, b) = r(s, a, b) + \gamma \E_{s^\prime} V_\tau^{x, f}(s^\prime)$
 \begin{align*}
     \nabla_{\theta} V_\tau^{x, f}(s_0) &= \nabla_{\theta} \left[ \sum_{b_0} f(b_0|s_0) \sum_{a_0} x(a_0|s_0) \bigg{(} r(s_0, a_0, b_0) + \tau \log{f(b_0|s_0)} + \gamma \E_{s_1} V_\tau^{x, f}(s_1) \bigg{)} \right]\\
     &= \nabla_{\theta} \left[ \sum_{b_0} f(b_0|s_0) \E_{a_0 \sim x} \bigg{(} Q_\tau^{x, f}(s_0, a_0, b_0) +\tau \log{f(b_0|s_0)} \bigg{)} \right]\\
     &= \sum_{b_0} f(b_0|s_0) \nabla_{\theta} \log{f(b_0|s_0)} \E_{a_0 \sim x} \bigg{(} Q_\tau^{x, f}(s_0, a_0, b_0) +\tau \log{f(b_0|s_0)} \bigg{)} \\
     & \qquad \qquad + \E_{b_0 \sim f} \E_{a_0 \sim x} \nabla_{\theta} \bigg{(} r(s_0, a_0, b_0) + \gamma \E_{s_1} V_\tau^{x, f}(s_1) +\tau \log{f(b_0|s_0)} \bigg{)}\\
     &= \E \left[ \sum_{t=0}^\infty \gamma^t \nabla_{\theta} \log{f(b_t|s_t)} \bigg{(} Q_\tau^{x, f}(s_t, a_t, b_t) + \tau \log{f(b_t|s_t)} \bigg{)} \right]\\
     &= \frac{1}{1-\gamma} \E_{s \sim d_{s_0}^{x, f}} \E_{a \sim x} \E_{b \sim f} \nabla_{\theta} \log{f(b|s)} \bigg{(} Q_\tau^{x, f}(s, a, b) + \tau \log{f(b|s)}\bigg{)}\\
     &= \frac{1}{1-\gamma} \E_{s \sim d_{s_0}^{x, f}} \E_{a \sim x} \E_{b \sim f} \nabla_{\theta} \log{f(b|s)} A_\tau^{x, f}(s, a, b)
 \end{align*}
 \end{proof}
 \end{lemma}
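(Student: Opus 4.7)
The plan is to mimic the unregularized policy-gradient derivation in Lemma~\ref{lem_policy_grad}, but carry the entropy term through every step and invoke the baseline trick at the end to convert the soft $Q$-function into the soft advantage. First, I would write a one-step Bellman-type unfolding using the regularized reward: since $V_\tau^{\pi_1,\pi_2}(\sigma)=\E\sum_t \gamma^t(r+\tau\log\pi_2)$, we have
\begin{align*}
    V_\tau^{\pi_1,\pi_2}(s_0) = \sum_{a_0}\pi_1(a_0|s_0)\sum_{b_0}\pi_2(b_0|s_0)\Big[r(s_0,a_0,b_0)+\tau\log\pi_2(b_0|s_0)+\gamma\E_{s_1}V_\tau^{\pi_1,\pi_2}(s_1)\Big],
\end{align*}
so that $Q_\tau^{\pi_1,\pi_2}(s_0,a_0,b_0)=r(s_0,a_0,b_0)+\gamma\E_{s_1}V_\tau^{\pi_1,\pi_2}(s_1)$ sits naturally inside the inner bracket.

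Second, I would differentiate with respect to $\theta_2$ and apply the product rule to the $\pi_2$ factor. This produces three kinds of terms: (i) the usual log-trick term $\nabla_{\theta_2}\pi_2 \cdot [Q_\tau+\tau\log\pi_2]$ rewritten as $\pi_2\nabla_{\theta_2}\log\pi_2 \cdot[Q_\tau+\tau\log\pi_2]$, (ii) a stray term $\pi_1\pi_2\cdot\tau\nabla_{\theta_2}\log\pi_2$ coming from differentiating the explicit $\tau\log\pi_2$ inside the bracket, and (iii) the recursive piece $\gamma\E_{s_1}\nabla_{\theta_2}V_\tau^{\pi_1,\pi_2}(s_1)$. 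The stray term (ii) vanishes identically because
\begin{align*}
    \sum_{b_0}\pi_2(b_0|s_0)\tau\nabla_{\theta_2}\log\pi_2(b_0|s_0)=\tau\sum_{b_0}\nabla_{\theta_2}\pi_2(b_0|s_0)=\tau\nabla_{\theta_2}\!\sum_{b_0}\pi_2(b_0|s_0)=0.
\end{align*}
This cancellation is the only place where the regularized calculation differs from the unregularized one, and it is what I expect to be the only subtle point; the rest is bookkeeping.

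Third, I would unroll the recursion exactly as in the proof of Lemma~\ref{lem_policy_grad}, collecting a geometric sum $\E\sum_{t\ge 0}\gamma^t\nabla_{\theta_2}\log\pi_2(b_t|s_t)[Q_\tau^{\pi_1,\pi_2}(s_t,a_t,b_t)+\tau\log\pi_2(b_t|s_t)]$, and repackage it as an expectation over the discounted state-visitation measure $d_{s_0}^{\pi_1,\pi_2}$ with the usual $\frac{1}{1-\gamma}$ prefactor. This already gives the first claimed form, with $Q_\tau+\tau\log\pi_2$ in place of $A_\tau$. Finally, to upgrade $Q_\tau+\tau\log\pi_2$ to $A_\tau=Q_\tau+\tau\log\pi_2-V_\tau$, I would subtract the state-only baseline $V_\tau^{\pi_1,\pi_2}(s)$ and observe that $\E_{b\sim\pi_2(\cdot|s)}\nabla_{\theta_2}\log\pi_2(b|s)\cdot V_\tau^{\pi_1,\pi_2}(s)=V_\tau^{\pi_1,\pi_2}(s)\cdot\nabla_{\theta_2}\sum_b\pi_2(b|s)=0$, so the subtraction leaves the gradient unchanged and yields the stated identity.
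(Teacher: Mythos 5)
Your proposal is correct and follows essentially the same route as the paper's proof: one-step unfolding of the regularized value with the soft $Q$-function, product rule plus the log-trick, unrolling the recursion into the discounted visitation measure, and the zero-mean score-function baseline to pass from $Q_\tau+\tau\log\pi_2$ to $A_\tau$. Your explicit verification that the stray $\tau\,\E_{b}\nabla_{\theta_2}\log\pi_2$ term vanishes is a point the paper leaves implicit, but it is the same argument.
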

 Adopting softmax parameterization, gradient is written as
 \begin{align*}
     \frac{\partial V_\tau^{x, f}(s_0)}{\partial \theta(s, b)} = \frac{1}{1-\gamma} d_{s_0}^{x, f}(s) f(b|s) \E_{a \sim x} A_\tau^{x, f}(s, a, b)
 \end{align*}
 \begin{lemma}[Regularized update rule]
 \begin{align}
     f^{t+1}(b|s) \propto \left( f^t(b|s)\right)^{1- \frac{\eta \tau}{1-\gamma}} \exp{\bigg{(} -\frac{\eta}{1-\gamma} \sum_a x(a|s) Q_\tau^{x, f}(s, a, b) \bigg{)}}
 \end{align}
 \end{lemma}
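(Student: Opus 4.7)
The plan is to mirror the derivation of the unregularized NPG update given just above in the appendix, but to track the extra $\tau\log\pi_2$ term that appears in the regularized advantage $A_\tau$, and then exponentiate carefully so that this term becomes a fractional power rather than a linear shift.

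First I would write the NPG step $\theta_2^{t+1}=\theta_2^t-\eta F_\sigma(\theta_2^t)^{\dagger}\nabla_{\theta_2}V_\tau^{\pi_1,\pi_2^t}(\sigma)$ and, exactly as in the unregularized proof, identify the preconditioned gradient with the minimizer $w^\star$ of
\[
L(w)=\E_{s\sim d_\sigma^{\pi_1,\pi_2}}\E_{b\sim\pi_2}\Bigl(w^{\top}\nabla_{\theta_2}\log\pi_2(b|s)-\sum_a\pi_1(a|s)A_\tau^{\pi_1,\pi_2}(s,a,b)\Bigr)^{\!2}.
\]
Setting $\nabla L(w^\star)=0$ and applying the regularized policy-gradient lemma (which replaces $A$ by $A_\tau$ in the single-agent formula) yields $w^\star=(1-\gamma)F_\sigma^{\dagger}\nabla_{\theta_2}V_\tau^{\pi_1,\pi_2}(\sigma)$. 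For the softmax parameterization one checks, exactly as in the unregularized case, that $w^\star(s,b)=\sum_a\pi_1(a|s)A_\tau^{\pi_1,\pi_2}(s,a,b)+v(s)$ makes $L(w^\star)=0$ for an arbitrary state-only baseline $v(s)$.

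Next I would unfold $A_\tau^{\pi_1,\pi_2}(s,a,b)=Q_\tau^{\pi_1,\pi_2}(s,a,b)+\tau\log\pi_2(b|s)-V_\tau^{\pi_1,\pi_2}(s)$. Since $\log\pi_2(b|s)$ and $V_\tau^{\pi_1,\pi_2}(s)$ do not depend on $a$, averaging over $\pi_1(\cdot|s)$ leaves them unchanged, and the $V_\tau^{\pi_1,\pi_2}(s)$ piece can be folded into the baseline. Hence the parameter update reads
\[
\theta_2^{t+1}(s,b)=\theta_2^t(s,b)-\frac{\eta}{1-\gamma}\sum_a\pi_1(a|s)Q_\tau^{\pi_1,\pi_2^t}(s,a,b)-\frac{\eta\tau}{1-\gamma}\log\pi_2^t(b|s)+c(s),
\]
where $c(s)$ is a state-only shift that will cancel under softmax normalization.

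Finally I would pass to the policy via $\pi_2^{t+1}(b|s)\propto\exp(\theta_2^{t+1}(s,b))$, using $\exp(\theta_2^t(s,b))\propto\pi_2^t(b|s)$ and $\exp(-\tfrac{\eta\tau}{1-\gamma}\log\pi_2^t(b|s))=\pi_2^t(b|s)^{-\eta\tau/(1-\gamma)}$. Multiplying the three factors gives the $\pi_2^t(b|s)^{\,1-\eta\tau/(1-\gamma)}$ power against the desired exponential-of-$Q_\tau$ factor, and $c(s)$ disappears in the normalization, producing the claimed formula. The only subtle point, which I would spell out carefully, is precisely this: because $w^\star$ contains a $\tau\log\pi_2^t$ term evaluated at the current iterate, its exponential combines multiplicatively with $\pi_2^t$ to yield a fractional power rather than a simple additive logit shift; beyond this the computation is the same routine algebra as in the unregularized case.
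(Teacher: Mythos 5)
Your proposal is correct and follows essentially the same route as the paper: identify the preconditioned gradient with the least-squares minimizer $w^\star$, use the regularized policy-gradient lemma to write it as $\frac{1}{1-\gamma}\E_{a\sim\pi_1}A_\tau^{\pi_1,\pi_2^t}(s,a,b)$ plus a state-only baseline, expand $A_\tau=Q_\tau+\tau\log\pi_2-V_\tau$, and absorb the $\tau\log\pi_2^t$ term multiplicatively into $\pi_2^t$ to obtain the fractional power $1-\frac{\eta\tau}{1-\gamma}$. Your explicit logit-level bookkeeping of the baseline $c(s)$ is slightly more detailed than the paper's, but the argument is the same.
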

 \begin{proof}
 Denote update direction as $w^* = (F_\sigma^{\theta})^\dag \nabla_{\theta} V_\tau^{x, f}(\sigma)$, which means $w^*$ is minimizer of square loss 
 \begin{align*}
 & \qquad \left\|F_\sigma^{\theta} w - \nabla_{\theta} V_\tau^{x, f}(\sigma) \right\|^2\\
 &= \sum_{s, b} \left( d_\sigma^{x, f}(s) f(b|s) (w_{s, b} - c(s)) - \frac{1}{1-\gamma} d_\sigma^{x, f} f(b|s) \E_{a \sim x} A_\tau^{x, f}(s, a, b) \right)^2,
 \end{align*}
 thus $w_{s, b} = c(s) + \frac{1}{1-\gamma} \E_{a \sim x} A^{x, f}(s, a, b)$, and 
 \begin{align*}
     f^{t+1}(b|s) &\propto f^t(b|s) \exp{ \left( -\frac{\eta}{1-\gamma} \E_{a \sim x} A_\tau^{(t)}(s, a, b) \right)}\\
     &= f^t(b|s) \exp{ \left( -\frac{\eta}{1-\gamma} \E_{a \sim x} (Q_\tau^{(t)}(s, a, b)+\tau \log{f(b|s)} - V_\tau^{(t)}(s)) \right)}\\
     &\propto \left(f^t(b|s)\right)^{1- \frac{\eta \tau}{1-\gamma}} \exp{\left( -\frac{\eta}{1-\gamma} \sum_a x(a|s) Q_\tau^{(t)}(s, a, b) \right)}
 \end{align*}
 Proof is completed.
 \end{proof}
 
 \begin{lemma}[Performance improvement lemma]
 \begin{align*}
     V_\tau^t(s_0) = V_\tau^{t+1}(s_0) + \E_{s \sim d_{s_0}^{t+1}} \left[ (\frac{1}{\eta} - \frac{\tau}{1-\gamma}) KL \left( f^{t+1}(\cdot|s) \| f^t(\cdot|s) \right) + \frac{1}{\eta} KL \left(f^t(\cdot|s) \| f^{t+1}(\cdot|s) \right) \right]
 \end{align*}
 \begin{proof}
 Regularized update rule can be transformed to 
 \begin{align*}
     \frac{1-\gamma}{\eta} \left( \log{f^{t+1}(b|s)} - \log{f^t(b|s)} \right) + \frac{1-\gamma}{\eta} \log{Z^t(s)} = - \tau \log{f^t(b|s)} - \E_{a\sim x} Q_\tau^t(s, a, b)
 \end{align*}
 Then 
 \begin{align*}
     V_\tau^t(s_0) &= \E_{a\sim x} \E_{b \sim f^t} \left[ \tau \log{f^t(b_0|s_0)} + Q_\tau^t(s_0, a_0, b_0) \right]\\
     &= -\frac{1-\gamma}{\eta} \log{Z^t(s_0)} + \frac{1-\gamma}{\eta} KL \left(f^t(\cdot\| s_0), f^{t+1}(\cdot\| s_0) \right)\\
     &= \E_{b_0 \sim f^{t+1}} \left[ \tau \log{f^t(b|s)} + \E_{a \sim x} Q_\tau^t(s, a, b) + \frac{1-\gamma}{\eta}\left( \log{f^{t+1}(b|s)} - \log{f^t(b|s)} \right) \right]\\
     & \qquad + \frac{1-\gamma}{\eta} KL \left( f^t(\cdot|s_0) \| f^{t+1}(\cdot|s_0) \right)\\
     &= \E_{b_0 \sim f^{t+1}} \left[ \tau \log{f^{t+1}(b_0|s_0)} + \E_{a \sim x} Q_\tau^t(s_0, a_0, b_0) \right]\\
     & \qquad+ \left( \frac{1-\gamma}{\eta} -\tau \right) KL \left( f^{t+1}(\cdot|s_0) \| f^{t}(\cdot|s_0) \right) + \frac{1-\gamma}{\eta} KL \left( f^t(\cdot|s_0) \| f^{t+1}(\cdot|s_0) \right)
 \end{align*}
 Note that: $Q_\tau^t(s_0, a_0, b_0) = r(s_0, a_0, b_0) + \gamma \E_{s_1} V_\tau^t(s_1)$, apply this recurrently then the proof is completed.
 \end{proof}
 
 \end{lemma}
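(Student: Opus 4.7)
The plan is to derive the identity by combining the closed-form NPG update rule for $\pi_2^{t+1}$ with two different expectations (under $\pi_2^t$ and under $\pi_2^{t+1}$) in order to separately pin down the log-normalizer $\log Z^t(s)$ and to generate a one-step recursion whose remainder is a KL-divergence pair.

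First, I would take logs of the regularized update
$$\pi_2^{t+1}(b|s) \propto (\pi_2^t(b|s))^{1-\eta\tau/(1-\gamma)} \exp\bigl(-\tfrac{\eta}{1-\gamma}\textstyle\sum_a \pi_1(a|s) Q_\tau^{t}(s,a,b)\bigr),$$
isolate $\sum_a \pi_1(a|s) Q_\tau^t(s,a,b)$, and produce the pointwise identity
$$\tau\log\pi_2^t(b|s) + \textstyle\sum_a \pi_1(a|s) Q_\tau^t(s,a,b) \;=\; \tfrac{1-\gamma}{\eta}\log\tfrac{\pi_2^t(b|s)}{\pi_2^{t+1}(b|s)} \;-\; \tfrac{1-\gamma}{\eta}\log Z^t(s).$$
Averaging this against $\pi_2^t(\cdot|s)$ and using the Bellman-type identity $V_\tau^t(s) = \E_{b\sim\pi_2^t}\bigl[\tau\log\pi_2^t(b|s) + \sum_a \pi_1(a|s)Q_\tau^t(s,a,b)\bigr]$ lets me solve for the normalizer: $-\tfrac{1-\gamma}{\eta}\log Z^t(s) = V_\tau^t(s) - \tfrac{1-\gamma}{\eta}\mathrm{KL}(\pi_2^t(\cdot|s)\|\pi_2^{t+1}(\cdot|s))$.

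Second, I would average the same pointwise identity against $\pi_2^{t+1}(\cdot|s)$ instead. This produces $-\tfrac{1-\gamma}{\eta}\mathrm{KL}(\pi_2^{t+1}\|\pi_2^t) - \tfrac{1-\gamma}{\eta}\log Z^t(s)$ on the right, and on the left I would convert $\tau\log\pi_2^t$ to $\tau\log\pi_2^{t+1}$ at the cost of an extra $-\tau\,\mathrm{KL}(\pi_2^{t+1}\|\pi_2^t)$ term. Substituting my expression for $\log Z^t(s)$ yields the one-step identity
$$V_\tau^t(s) \;=\; \E_{b\sim\pi_2^{t+1}}\bigl[\tau\log\pi_2^{t+1}(b|s) + \textstyle\sum_a \pi_1(a|s)Q_\tau^t(s,a,b)\bigr] + \bigl(\tfrac{1-\gamma}{\eta}-\tau\bigr)\mathrm{KL}(\pi_2^{t+1}\|\pi_2^t) + \tfrac{1-\gamma}{\eta}\mathrm{KL}(\pi_2^t\|\pi_2^{t+1}).$$

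Third, expanding $Q_\tau^t(s_0,a_0,b_0) = r(s_0,a_0,b_0) + \gamma \E_{s_1}[V_\tau^t(s_1)]$ and unrolling the recursion under sampling from $(\pi_1,\pi_2^{t+1})$, the leading telescoped sum $\E^{t+1}_{s_0}\bigl[\sum_{h\ge 0}\gamma^h(\tau\log\pi_2^{t+1}(b_h|s_h) + r(s_h,a_h,b_h))\bigr]$ is exactly $V_\tau^{t+1}(s_0)$ by the regularized value definition. The remainder is $\sum_{h\ge 0}\gamma^h \E^{t+1}_{s_0}[\Delta(s_h)]$ where $\Delta$ denotes the KL pair in the one-step identity above; folding this into the discounted state visitation distribution $d_{s_0}^{t+1}(\cdot) = (1-\gamma)\sum_{h\ge 0}\gamma^h \Pr^{t+1}(s_h=\cdot\mid s_0)$ produces the factor $\tfrac{1}{1-\gamma}$, which absorbs the $(1-\gamma)$ in $\tfrac{1-\gamma}{\eta}$ and leaves the advertised $\tfrac{1}{\eta}$ and $\tfrac{1}{\eta}-\tfrac{\tau}{1-\gamma}$ coefficients.

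The main obstacle is bookkeeping the $(1-\gamma)$ factors correctly: the NPG rule contributes $(1-\gamma)/\eta$, the entropy regularization contributes $\tau$, and the visitation measure contributes $1/(1-\gamma)$, so I need to be careful to apply the $\E_{b\sim\pi_2^t}$ step (to identify $\log Z^t$ with $V_\tau^t$ up to a KL) before the $\E_{b\sim\pi_2^{t+1}}$ step and to keep track of which direction of KL each expectation produces. Once those two expectations are combined and the recursion is unrolled under the correct trajectory measure, the claimed formula follows directly.
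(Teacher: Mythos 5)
Your proposal is correct and follows essentially the same route as the paper's proof: the same pointwise log-identity from the regularized update, the same two expectations (under $\pi_2^t$ to identify $\log Z^t(s)$ with $V_\tau^t(s)$ up to a KL term, then under $\pi_2^{t+1}$ to produce the one-step identity), and the same unrolling of $Q_\tau^t = r + \gamma \E V_\tau^t$ under the $(\pi_1,\pi_2^{t+1})$ trajectory measure to absorb the $(1-\gamma)$ factors into the visitation distribution. The coefficient bookkeeping you describe matches the paper exactly.
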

 
 For regularized Markov games, the suboptimality gap is shown to be
 \begin{align*}
     &\qquad V_\tau^{x, f_\tau^*}(\sigma) - V_\tau^{x, f^t}(\sigma)\\
     &= \E \left[ \sum_{i=0}^\infty \gamma^i \left(r(s_i, a_i, b_i) + \tau \log{f_\tau^*(b_i|s_i)} \right) \right] - V_\tau^{x, f^t}(\sigma)\\
     &= \E \left[ \sum_{i=0}^\infty \gamma^i \left( r(s_i, a_i, b_i) + \tau \log{f_\tau^*(b_i|s_i)} + \gamma V_\tau^{x, f^t}(s_{i+1}) - V_\tau^{x, f^t}(s_i) \right) \right]\\
     &= \frac{1}{1-\gamma} \E_{s \sim d_\sigma^{x, f_\tau^*}} \left[ \sum_b f_\tau^*(b|s) \left( \E_{a \sim x} Q_\tau^{(t)}(s, a, b) + \tau \log{f_\tau^*(b|s)} \right) - V_\tau^{(t)}(s) \right].
 \end{align*}
 Take reverse, 
 \begin{align*}
     &\qquad V_\tau^{x, f^t}(\sigma) - V_\tau^{x, f_\tau^*}(\sigma)\\
     &= \frac{1}{1-\gamma} \E_{s \sim d_\sigma^{x, f_\tau^*}} \left[ V_\tau^{(t)}(s) + \sum_b f_\tau^*(b|s) \left( -\E_{a \sim x} Q_\tau^{(t)} - \tau \log{f_\tau^*(b|s)}\right) \right],
 \end{align*}
 where
 \begin{align*}
     & \qquad \sum_b f_\tau^*(b|s) \left( -\E_{a \sim x} Q_\tau^{(t)}(s, a, b) - \tau \log{f_\tau^*(b|s)}\right)\\
         &= \tau \sum_b f_\tau^*(b|s) \log{\left( \frac{e^{\E_{a \sim x}  - \nicefrac{Q_\tau^{(t)}(s, a, b)}{\tau}}}{f_\tau^*(b|s)} \right)}\\
         &\le \tau \log{\sum_b \exp{ \left( - \E_{ a\sim x} \frac{Q_\tau^{(t)}(s, a, b)}{\tau} \right)}}
 \end{align*}
 
\paragraph{Contraction property} Following contraction argument raised by \citet{cen2020fast}, suppose $\eta = \frac{1-\gamma}{\tau}$
 \begin{align*}
     & \quad V_\tau^{x, f^{t+1}}(\sigma) - V_\tau^{x, f_\tau^*}(\sigma)\\
     &= V_\tau^{x, f^{t+1}}(\sigma) - V_\tau^{x, f^{t}}(\sigma) + V_\tau^{x, f^{t}}(\sigma) - V_\tau^{x, f_\tau^*}(\sigma)\\
     &= \E_{s \sim d_\rho^{t+1}} - \frac{1}{\eta} KL \left( f^t(\cdot|s) 
     \| f^{t+1}(\cdot|s) \right) + V_\tau^{x, f^{t}}(\sigma) - V_\tau^{x, f_\tau^*}(\sigma)\\
     &\le \left( V_\tau^{x, f^{t}}(\sigma) - V_\tau^{x, f_\tau^*}(\sigma) \right) \cdot \left( 1 - \left\| \frac{d_\rho^{x, f_\tau^*}}{d_\rho^{t+1}} \right\|_\infty^{-1} \right)\\
     &\le \left( V_\tau^{x, f^{t}}(\sigma) - V_\tau^{x, f_\tau^*}(\sigma) \right) \cdot \left[ 1 - (1-\gamma)\left\| \frac{d_\rho^{x, f_\tau^*}}{\rho} \right\|_\infty^{-1} \right]
 \end{align*}
 
  Denote \emph{stationary distribution} as: $\mu_\tau^* = d_{\mu_\tau^*}^{x, f_\tau^*}$ and
  \begin{align*}
      & \quad V_\tau^{x, f^{t}}(\sigma) - V_\tau^{x, f_\tau^*}(\sigma)\\
      &\le \left\| \frac{\sigma}{\mu_\tau^*} \right\|_\infty \cdot \left( V_\tau^{x, f^{t}}(\mu_\tau^*) - V_\tau^{x, f_\tau^*}(\mu_\tau^*) \right)\\
      &\le \left\| \frac{\sigma}{\mu_\tau^*} \right\|_\infty \cdot \gamma^t \left( V_\tau^{x, f^{0}}(\mu_\tau^*) - V_\tau^{x, f_\tau^*}(\mu_\tau^*) \right)
  \end{align*}
  
  Combining these results, we are ready to show Theorem~\ref{thm:popu_NPG_entropy}.\\
  \textbf{[Proof for Theorem~\ref{thm:popu_NPG_entropy}]}
  \begin{proof}
   \begin{align*}
    &\quad V^{x, f^T}(\sigma) - V^{x, f^*(x)}(\sigma)\\
    &= V^{x, f^T}(\sigma) - V_\tau^{x, f^T}(\sigma) + V_\tau^{x, f^T}(\sigma) - V_\tau^*(\sigma) + V_\tau^*(\sigma) -V^{x, f^*(x)}(\sigma)\\
    &\le \frac{\tau \log{|\gA|}}{1-\gamma} + \left\| \frac{\sigma}{\mu_\tau^*} \right\|_\infty \cdot \gamma^T \left( V_\tau^{x, f^{0}}(\mu_\tau^*) - V_\tau^{x, f_\tau^*}(\mu_\tau^*) \right),
\end{align*}
note that $V_\tau^{x, f^{0}}(\mu_\tau^*) - V_\tau^{x, f_\tau^*}(\mu_\tau^*) \le 1 + \tau \log{|\gA|}.$

Proof is completed via substitution into Lemma~\ref{original_form}.
\end{proof}

    \section{Proof for Section~\ref{sec:online_algo}}
    \label{sec:pf_online}
    For online setting, we consider Approximate Generalized Policy Iteration (Eq.\ref{GPI_greedy}, Eq.\ref{GPI_iter}) in expectation
\begin{gather}
        \E [\gT V_{k-1}] \le \E[\gT_{x^k} V_{k-1}] + \E[\epsilon_k^\prime]\\
            \E[V_k] = \E\left[\left( \gT_{x^k} \right)^m V_{k-1}\right] + \E[\epsilon_k]
    \end{gather}
Here, we try to bound the summation of tolerances over state space $\gS$. In function approximation, $\gS$ could be very large or even infinite. Therefore, we use the optimization measure $\sigma$ we take to train our policy for generalization across states, namely:
\begin{align*}
    \E[\epsilon_k^\prime] &= \E[\sum_s \sigma(s) \epsilon_k^\prime(s)]\\
    \E[\epsilon_k] &= \E[\sum_s \sigma(s) \epsilon_k(s)]
\end{align*}
Randomness is brought by oracle sampling and stochastic optimization.

Based on the iterative scheme, Lemma~\ref{original_form} could be adapted to expectation: after $k$ iterations,
\begin{align*}
        \E \left[V^*(\rho) - \inf_{f} V^{x^k, f}(\rho) \right] &\le \frac{2(\gamma-\gamma^k) \gC_{\rho, \sigma}^{1, k, 0}}{(1-\gamma)^2} 
     \epsilon + \frac{(1-\gamma^k)\gC_{\rho, \sigma}^{0, k, 0}}{(1-\gamma)^2}  \epsilon^\prime
        + \frac{ 2 \gamma^k \gC_{\rho, \sigma}^{k, k+1, 0}}{1-\gamma}  ,
    \end{align*}
    where
    \begin{gather*}
        \epsilon = \sup_{1 \le j \le k-1} \E[\epsilon_j],\\
        \epsilon^\prime = \sup_{1 \le j \le k } \E[\epsilon_j^\prime].
    \end{gather*}
We are able to derive suboptimality gap for the online setting.

\textbf{[Proof sketch]} Similar to Section~\ref{sec:pf_popu}, discuss errors brought by two phases respectively.
\paragraph{Greedy Step}
~\\
\emph{Problem Restatement:} Consider a two-player zero-sum matrix game, formally 
\begin{gather*}
    \min_{f(\cdot \mid s) \in \Delta(|\gA|)} \max_{x(\cdot \mid s) \in \Delta(|\gA|)} f^\top A_s x\\
    A_s(a, b) = r(s, a, b) + \sum_{s^\prime} \gP(s^\prime \mid s, a, b) V_{k-1}(s^\prime) 
\end{gather*}
The goal is to output policy $\Bar{x_{T^\prime}}$ for max player and an upper bound of \textbf{Greedy error} $\E[\epsilon^\prime] = \E[\sum_s \sigma(s) \epsilon_k^\prime (s)]$:
\begin{align*}
    & \quad \E \left[ \sum_s \sup_x \inf_f \phi_s(f(\cdot|s), x(\cdot|s)) - \inf_f \phi_s(f(\cdot|s), \Bar{x_{T^\prime}}(\cdot | s)) \right]\\
    & \le \frac{1}{T^\prime} \E \sum_s \Bigg{\{} \sum_{t=1}^{T^\prime} \phi_s(f_t, x_t) - \inf_f \sum_{t=1}^{T^\prime} \phi_s(f, x_t) + \sum_{t=1}^{T^\prime} (-\phi_s(f_t, x_t)) - \inf_x \sum_{t=1}^{T^\prime} (-\phi_s(f_t, x)) \Bigg{\}}
\end{align*}
We only analyze max player ($x^t$) and min player ($f^t$) is very similar.

First, we show our Algorithm~\ref{alg:online_MD} is using unbiased gradient estimates. Observe:
\begin{align*}
    & \quad \E [g_n]\\
    &= \E_{s\sim \sigma} \E_{a \sim x^t(\cdot|s)} \E_{b \sim f^t(\cdot|s)} \E_{s^\prime \sim  \gP(\cdot|s,a,b)} \E_{a^\prime \sim x^t(\cdot|s)} [r(s,a,b) +\gamma V_{k-1}(s^\prime)]\\
    &\qquad \qquad \cdot (\nabla_\xi \log{x^t(a|s)} - \nabla_\xi \log{x_t(a^\prime|s)})\\
    &= \E_{s\sim \sigma} \E_{a \sim x^t(\cdot|s)} (A_s f^t)_a \nabla_\xi \log{x^t(a|s)} - \E_{s\sim \sigma} \E_{a^\prime \sim x^t(\cdot|s)} \phi_s(f_t, x_t) \nabla_\xi \log{x^t(a^\prime|s)}\\
    &= \E_{s\sim \sigma} \E_{a \sim x^t(\cdot|s)} [(A_s f_t)_a - \phi_s(f_t, x_t)] \nabla_\xi \log{x^t(a|s)}.
\end{align*}
Recall notations raised in Eq.~\ref{eq_MD}, where $x^*$ is \emph{best response} of average policy $\frac{1}{T^\prime} \sum_{t=1}^{T^\prime} f^t$,
\begin{align*}
    & \qquad \E_{s \sim \sigma} KL(x^*(\cdot|s) \| x^t(\cdot|s)) - KL(x^*(\cdot|s) \| x^{t+1}(\cdot|s)) \\
    &= \E_{s \sim \sigma} \E_{a \sim x^*} \log \frac{x^{t+1}(a|s)}{ x^t (a|s)}\\
    &\ge \E_{s \sim \sigma} \E_{a \sim x^*(\cdot|s)} \langle \nabla_\theta \log x^t (a|s), \eta^\prime \hat{w}^t \rangle - \frac{\beta}{2} \| \xi^{t+1} - \xi^t \|^2\\
    &= \eta^\prime \E_{s \sim \sigma}\E_{a \sim x^*} \left[ \nabla_\theta \log{x^t(a|s)}^\top \hat{w}^t - \left( (A_s f_t)_a - \phi_s(f_t, x_t) \right) \right]\\
    & \qquad +\eta^\prime \E_{s \sim \sigma} \E_{a \sim x^*(\cdot|s)} \left[ (A_s f_t)_a - \phi_s(f_t, x_t) \right] - \frac{\beta {\eta^\prime}^2 {W}^2}{2}\\
    &\ge - \eta^\prime \sqrt{\E_{s \sim \sigma}\E_{a \sim x^*(\cdot|s)} \left( \nabla_\theta \log{x^t (a|s)}^\top \hat{w}^t - \left[ (A_s f_t)_a - \phi_s(f_t, x_t) \right] \right)^2}\\
    &\qquad  + \eta^\prime \E_{s \sim \sigma}(\phi_s(f_t, x^*) - \phi_s(f_t, x_t)) - \frac{\beta {\eta^\prime}^2 {W}^2}{2},
\end{align*}
where we define $L(w^t) = \E_{s \sim \sigma}\E_{a \sim x^t} \left( \nabla_\theta \log{x^t (a|s)}^\top w^t - \left[ (A_s f_t)_a - \phi_s(f_t, x_t) \right] \right)^2$ with little abuse of notation.

Rearrange inequality and the upper bound of $\E_{s \sim \sigma} \phi_s(f_t, x^*) - \phi_s(f_t, x_t)$ is smaller than
\begin{equation*}
      \frac{1}{\eta^\prime} \E_{s \sim \sigma} \left[ KL(x^*(\cdot|s) \| x^t(\cdot|s)) - KL(x^*(\cdot|s) \| x^{t+1}(\cdot|s)) \right] + \sqrt{\sup_{t \le T^\prime} \left\|  \frac{1}{x^t} \right\|_\infty} \sqrt{L(\hat{w}^t)} + \frac{\beta}{2}\eta^\prime {W}^2
\end{equation*}

Expectation of $\epsilon_{est}$ is bounded by sample complexity, SGD optimizer has
\begin{align*}
    \epsilon_{est} = \E [L(\hat{w}^t)] - L(w^*) \le \frac{G W}{\sqrt{N^\prime}},
\end{align*}
where $G = 2B(BW + \nicefrac{2}{1-\gamma})$ bounds norm of gradient estimation, learning rate $\alpha^\prime$ is set as $\nicefrac{W}{G \sqrt{N^\prime}}$, see Lemma~\ref{lem_stat_error}. 

Thus
\begin{align*}
    & \quad \E \left[ \sup_x \inf_f \phi_s(f(\cdot|s), x(\cdot|s)) - \inf_f \phi_s(f(\cdot|s), \Bar{x_{T^\prime}}(\cdot | s)) \right] \\
    & \le \frac{1}{\eta^\prime} \frac{1}{T^\prime}\E_{s \sim \sigma} \left[ KL(x^*(\cdot|s) \| x^1(\cdot|s)) + KL(f^*(\cdot|s) \| f^1(\cdot|s)) \right] \\
    & \qquad + \left(\sqrt{\sup_{t \le T^\prime} \left\| \frac{1}{f^t} \right\|_\infty}+\sqrt{\sup_{t \le T^\prime} \left\| \frac{1}{x^t} \right\|_\infty}\right) \cdot \sqrt{\epsilon_{approx}^\prime + \frac{G W}{\sqrt{N^\prime}}} + \beta \eta^\prime {W}^2\\
    &\le \frac{2 \log{|\gA|}}{\eta^\prime T^\prime} + \left(\sqrt{\sup_{t \le T^\prime} \left\| \frac{1}{f^t} \right\|_\infty}+\sqrt{\sup_{t \le T^\prime} \left\| \frac{1}{x^t} \right\|_\infty}\right) \cdot \sqrt{\epsilon_{approx}^\prime + \frac{G W}{\sqrt{N^\prime}}} + \beta \eta^\prime {W}^2
\end{align*}
Let $\eta^\prime = \sqrt{\frac{2\log{|\gA|}}{\beta {W}^2 T^\prime}}$, and finally $\E [\epsilon^\prime] = \E [\sum_s \sigma(s) \epsilon_k^\prime(s)]$ is lower than
\begin{align}
     2\sqrt{\frac{2\log{|\gA| \beta {W}^2}}{T^\prime}} + 2\iota\left(\sqrt{\epsilon_{approx}^\prime} + \frac{\sqrt{G W}}{{N^\prime}^{\frac{1}{4}}}\right)
\end{align}

\paragraph{Iteration Step}
 See NPG regret Lemma~\ref{lem4_NPG_regret} for two-player zero-sum games, where $err_t$ is bounded  when $\nu_0(s, a, b) = \nicefrac{\sigma(s)} {|\gA|^2}$ is an exploration distribution covering all states and actions:
\begin{align*}
    |err_t| &\le \sqrt{  \E_{s \sim d_\sigma^{x, f^*}, a\sim x, b\sim f^*(x)}
   \left[ A^{x, f^t} (s, a, b) - w^{t} \nabla_{\theta} \log{f^{t} (b|s)} \right]^2 } \\
   &\le \sqrt{ \left\| \frac{d_\sigma^{x, f^*}\cdot x \cdot f^*}{\nu^t} \right\|_\infty    \E_{s, a, b \sim \nu^t} \left( A^{x, f^t}(s, a, b) - w^t \nabla_{\theta} \log{f^t(b|s)} \right)^2        }\\
    &\le \sqrt{\frac{|\gA|^2}{1-\gamma} \left\| \frac{d_\sigma^{x, f^*}}{\sigma} \right\|_\infty L(\hat{w}^t, \theta)},
\end{align*}
Notice $\E  \sqrt{\frac{1}{T} \sum_t L(\hat{w}^t, \theta) } \le \sqrt{  \frac{1}{T} \sum_t \E[L(\hat{w}^t, \theta)]} $,
 then proof is completed via upper bounding $ \E[L(\hat{w}^t, \theta)] $.\\
 The final equality contains \emph{distribution mismatch coefficient} $ \left\| \nicefrac{d_\sigma^{x, f^*}}{\sigma} \right\|_\infty$, which often appears in single-agent policy-based optimization. It measures the difficulty of exploration problems faced by algorithms. Furthermore, \emph{concentrability coefficients} are stronger, from which $ \left\| \nicefrac{d_\sigma^{x, f^*}}{\sigma} \right\|_\infty$ could be derived. See Lemma~\ref{lem_mismatch_coeff}.

We first introduce a two-player zero-sum Markov game version regret lemma, single agent version of MDP is useful for online NPG analysis~\citep{agarwal2020optimality}.

\begin{lemma}[NPG regret] \label{lem4_NPG_regret}
Assume for all $s \in \gS $ and $b \in \gA$ that $\log f(b|s)$ is a $\beta$-smooth function, then
\begin{align*}
    \frac{1}{T} \sum_{t=0}^{T-1} V^{x, f^{t}}(\sigma) - V^{x, f^*(x)}(\sigma)  \le \frac{1}{1-\gamma} \left( \frac{\log|\gA|}{\eta T} + \frac{\eta \beta W^2}{2} - \frac{1}{T} \sum_{t=0}^{T-1} err_t \right),
\end{align*}
where $err_t$ is defined as
\begin{align*}
    err_t &= \E_{s \sim d_\sigma^{x, f^*(x)}} \E_{b \sim f^*(x)} \left[ \sum_a x(a|s) A^{x, f^t} (s, a, b) - w^{t} \nabla_{\theta} \log{f^{t} (b|s)} \right]\\
    &= \E_{s, a, b}^* \left[ A^{x, f^t} (s, a, b) - w^{t} \nabla_{\theta} \log{f^{t} (b|s)} \right]
\end{align*}
where we denote $\E_{s,a,b}^* \coloneqq \E_{s \sim d_\sigma^{x, f^*}} \E_{a \sim x} \E_{b \sim f^*} $ for simplicity.

\end{lemma}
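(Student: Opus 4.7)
\noindent\textbf{Proof proposal for Lemma~\ref{lem4_NPG_regret}.} The plan is to mirror the single-agent NPG regret analysis of \cite{agarwal2020optimality}, but with the sign conventions adapted to the fact that here $\pi_2$ is a \emph{minimizing} player running NPG descent $\theta^{t+1}=\theta^t-\eta w^t$ against a fixed $\pi_1$. The proof proceeds in four steps, which I outline below.

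\emph{Step 1: Performance difference.} First I would invoke the two-player analogue of the performance difference lemma when only the min-player's policy is varied (this follows from the same telescoping argument used in Lemma~\ref{lem_policy_grad}). It gives
\begin{align*}
V^{\pi_1,\pi_2^t}(\sigma)-V^{\pi_1,\pi_2^*(\pi_1)}(\sigma)
 = -\tfrac{1}{1-\gamma}\,\E_{s,a,b}^{*}\bigl[A^{\pi_1,\pi_2^t}(s,a,b)\bigr],
\end{align*}
where the negative sign appears because $\pi_2^*(\pi_1)$ is the minimizer. Averaging over $t=0,\dots,T-1$ reduces the lemma to producing an upper bound on $-\tfrac{1}{T}\sum_t \E_{s,a,b}^{*}[A^{\pi_1,\pi_2^t}]$.

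\emph{Step 2: Introduce the approximation error.} I would write
\begin{align*}
-\E_{s,a,b}^{*}\bigl[A^{\pi_1,\pi_2^t}\bigr]
 = -\E_{s,a,b}^{*}\bigl[w^{t\top}\nabla_\theta\log\pi_2^{t}(b|s)\bigr] - err_t,
\end{align*}
isolating the $err_t$ term that already appears in the statement, and leaving us to bound the drift term $-\E_{s,a,b}^{*}[w^{t\top}\nabla_\theta\log\pi_2^{t}(b|s)]$.

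\emph{Step 3: Smoothness descent inequality.} By the $\beta$-smoothness assumption on $\log\pi_\theta$ and the descent update $\theta^{t+1}-\theta^t=-\eta w^t$ with $\|w^t\|_2\le W$, for every $(s,b)$ one has
\begin{align*}
\log\pi_2^{t+1}(b|s) \ge \log\pi_2^{t}(b|s)-\eta\, w^{t\top}\nabla_\theta\log\pi_2^{t}(b|s)-\tfrac{\beta\eta^2 W^2}{2}.
\end{align*}
Rearranging and taking the expectation $\E_{s\sim d_\sigma^{\pi_1,\pi_2^*}}\E_{b\sim \pi_2^*}$, the crucial observation is that, because the expectation is under $b\sim \pi_2^*$, the telescoping quantity $\E_{b\sim\pi_2^*}[\log\pi_2^{t+1}(b|s)-\log\pi_2^{t}(b|s)]$ equals
$\mathrm{KL}(\pi_2^*(\cdot|s)\,\|\,\pi_2^t(\cdot|s)) - \mathrm{KL}(\pi_2^*(\cdot|s)\,\|\,\pi_2^{t+1}(\cdot|s))$
after adding and subtracting $\log\pi_2^*$. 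Hence
\begin{align*}
-\eta\,\E_{s,a,b}^{*}\bigl[w^{t\top}\nabla_\theta\log\pi_2^{t}(b|s)\bigr]
\le \E_{s\sim d^{*}}\!\left[\mathrm{KL}(\pi_2^*\|\pi_2^t)-\mathrm{KL}(\pi_2^*\|\pi_2^{t+1})\right]+\tfrac{\beta\eta^2 W^2}{2}.
\end{align*}

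\emph{Step 4: Telescope.} Summing $t=0,\dots,T-1$, the KL differences telescope, leaving only $\mathrm{KL}(\pi_2^*\|\pi_2^0)-\mathrm{KL}(\pi_2^*\|\pi_2^T)\le \log|\gA|$ since $\pi_2^0$ is uniform and KL is nonnegative. Dividing by $\eta T$ and folding in Steps 1--2 yields exactly
\begin{align*}
\tfrac{1}{T}\sum_{t=0}^{T-1}\bigl[V^{\pi_1,\pi_2^t}(\sigma)-V^{\pi_1,\pi_2^*(\pi_1)}(\sigma)\bigr]
\le \tfrac{1}{1-\gamma}\!\left(\tfrac{\log|\gA|}{\eta T}+\tfrac{\eta\beta W^2}{2}-\tfrac{1}{T}\sum_{t=0}^{T-1}err_t\right).
\end{align*}

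The main obstacle I anticipate is bookkeeping the signs: since $\pi_2$ descends, the smoothness inequality goes in the \emph{opposite} direction to the usual ascent analysis, and one must be careful that the telescoping KL term ends up with the correct sign (nonnegative after dropping $-\mathrm{KL}(\pi_2^*\|\pi_2^T)$). A secondary subtlety is justifying the change of measure in Step 3, i.e. that the $\pi_2^*$-weighted log-ratio telescope produces KL divergences only because the expectation is taken under $\pi_2^*$, not $\pi_2^t$; this is the one place where the min-player analysis genuinely differs from the max-player analysis of \cite{agarwal2020optimality} and must be checked explicitly.
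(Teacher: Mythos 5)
Your proposal is correct and is essentially the paper's own proof, just presented in the reverse order: the paper starts from the KL difference $\E_{s,a,b}^*[KL(\pi_2^*\|\pi_2^t)-KL(\pi_2^*\|\pi_2^{t+1})]$, lower-bounds it via the smoothness inequality, inserts $err_t$ by adding and subtracting $A^{\pi_1,\pi_2^t}$, applies the performance-difference identity, and rearranges and telescopes, which is exactly your Steps 1--4 read backwards. All the ingredients (smoothness descent bound with $\|\theta^{t+1}-\theta^t\|=\eta\|w^t\|\le\eta W$, the $\pi_2^*$-weighted KL telescope, $KL(\pi_2^*\|\pi_2^0)\le\log|\gA|$ from $\theta^0=0$, and the sign flip from the minimizing player) match.
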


\begin{proof}
When making no abuse of notation, we denote $f^*$ as the \emph{best response} of fixed $x$ for simplicity from now on, i.e., $V^{x,f^*} = \inf_f V^{x,f}$

\begin{align*}
    & \quad \E_{s,a,b}^* \left(  KL(f^* \| f^t) - KL(f^* \| f^{t+1}) \right)  \\
    &= \E_{s,a,b}^* \log{\frac{f^{t+1}(b|s) }{f^{t}(b|s)}} \\
    &\ge \E_{s,a,b}^* \left[-\eta \nabla_{\theta} \log{f^t(b|s) w^t} - \frac{\beta \eta^2}{2} W^2 \right]\\
    &= -\eta \E_{s,a,b}^* A^{x, f^t}(s,a,b) + \eta \E_{s,a,b}^* \left(A^{x, f^*}(s, a, b) - \nabla_{\theta} \log{f^t(b|s) }\right) - \frac{\beta \eta^2 W^2}{2}\\
    &= -\eta (1-\gamma) \left(V^{x, f^*}(\sigma) - V^{x, f^t}(\sigma) \right) + \eta\ err_t - \frac{\beta \eta^2 W^2}{2}.
\end{align*}
Rearrange it and we get
\begin{align*}
    & \quad V^{x, f^t}(\sigma) - V^{x, f^*}(\sigma)\\
    & \le \frac{1}{1-\gamma} 
    \left( \frac{1}{\eta} \E_{s \sim d_\sigma^{x, f^*}} \E_{a \sim x}  \left(  KL(f^* \| f^t) - KL(f^* \| f^{t+1}) \right)  
    -err_t + \frac{\eta \beta W^2}{2}\right)
\end{align*}
 Taking the sum, and notice that $\theta^0 = 0$
 \begin{align*}
     \frac{1}{T} \sum_{t=0}^{T-1} (V^{x, f^t}(\sigma) - V^{x, f^*}(\sigma) ) \le
     \frac{1}{1-\gamma} \left( \frac{\log{|\gA |}}{\eta T} + \frac{\eta \beta W^2}{2} - \frac{1}{T} \sum_{t=0}^{T-1} err_t \right)
 \end{align*}
\end{proof}

\begin{lemma}[Unbiased estimation]
Sample-based gradient in Algorithm~\ref{alg:online_NPG} is unbiased of $\nabla_w L(w)$ (Eq.~\ref{sec5_update_rule}).
\end{lemma}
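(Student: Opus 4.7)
The plan is to compute the conditional expectation of the stochastic direction that appears inside the projection in Algorithm~\ref{alg:online_NPG} and match it against $\nabla_w L(w)$ from Eq.~\ref{sec5_update_rule}. Writing the stochastic direction as
\[
G_n \;:=\; 2\bigl(w_n^\top \nabla_\theta \log \pi_2^t(b|s)\,\nabla_\theta \log \pi_2^t(b|s) \;-\; g_n\bigr),
\]
the target is
\[
\nabla_w L(w_n) \;=\; 2\,\E_{s,a,b\sim \nu^t}\Bigl[\bigl(w_n^\top \nabla_\theta \log \pi_2^t(b|s) - A^{\pi_1,\pi_2^t}(s,a,b)\bigr)\nabla_\theta \log \pi_2^t(b|s)\Bigr].
\]
The quadratic-in-$w_n$ piece already matches after taking expectation over $(s,a,b)\sim\nu^t$ by linearity, so the task reduces to showing $\E[g_n] = \E_{s,a,b\sim \nu^t}[A^{\pi_1,\pi_2^t}(s,a,b)\,\nabla_\theta \log \pi_2^t(b|s)]$.

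I would handle the two summands of $g_n = \hat{Q}(s,a,b)\bigl(\nabla_\theta \log \pi_2^t(b|s) - \nabla_\theta \log \pi_2^t(b'|s)\bigr)$ separately. For the first summand, condition on $(s,a,b)$ and use Assumption~\ref{sec5_assump} to conclude $\E[\hat{Q}(s,a,b)\mid s,a,b] = Q^{\pi_1,\pi_2^t}(s,a,b)$, giving $\E[\hat{Q}(s,a,b)\nabla_\theta \log \pi_2^t(b|s)] = \E_{s,a,b\sim\nu^t}[Q^{\pi_1,\pi_2^t}(s,a,b)\nabla_\theta \log \pi_2^t(b|s)]$. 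For the second summand, use the fact that $b'\sim \pi_2^t(\cdot|s)$ is drawn independently of $\hat{Q}(s,a,b)$ conditional on $s$, so after conditioning on $s$ the score-function identity $\E_{b'\sim \pi_2^t(\cdot|s)}[\nabla_\theta \log \pi_2^t(b'|s)] = \nabla_\theta\!\sum_{b'}\pi_2^t(b'|s)=0$ kills this term entirely.

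To convert the $Q$-term into an $A$-term, I would invoke the structure of $\nu^t = \nu_{\nu_0}^{\pi_1,\pi_2^t}$: by its definition as a discounted state-action visitation measure, conditional on $s$ the action $b$ is distributed according to $\pi_2^t(\cdot|s)$. This lets me add the baseline $V^{\pi_1,\pi_2^t}(s)$ at no cost, since
\[
\E_{s,a,b\sim\nu^t}\bigl[V^{\pi_1,\pi_2^t}(s)\nabla_\theta \log \pi_2^t(b|s)\bigr] \;=\; \E_{s}\Bigl[V^{\pi_1,\pi_2^t}(s)\,\E_{b\sim\pi_2^t(\cdot|s)}[\nabla_\theta \log \pi_2^t(b|s)]\Bigr] \;=\; 0,
\]
so subtracting zero replaces $Q^{\pi_1,\pi_2^t}$ by $A^{\pi_1,\pi_2^t}$ inside the expectation. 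Combining the two pieces yields $\E[G_n]=\nabla_w L(w_n)$.

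The only genuinely delicate step is this baseline argument: the natural unbiased estimator of the policy-gradient-style quantity uses $Q$, but the quadratic loss $L(w)$ is defined with $A$. The reconciliation hinges on two ingredients that must be verified together — the control-variate identity $\E_{b\sim\pi_2^t(\cdot|s)}[\nabla_\theta\log\pi_2^t(b|s)]=0$ and the fact that the marginal law of $b$ given $s$ under $\nu^t$ is exactly $\pi_2^t(\cdot|s)$. Once these are in place the rest is a mechanical application of linearity of expectation and conditioning, so I expect the writeup to be short.
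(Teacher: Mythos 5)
Your proposal is correct and follows essentially the same route as the paper's proof: unbiasedness of $\hat{Q}$ handles the first summand of $g_n$, the score identity $\E_{b'\sim\pi_2^t(\cdot|s)}[\nabla_\theta\log\pi_2^t(b'|s)]=0$ kills the second, and the baseline $V^{\pi_1,\pi_2^t}(s)$ is inserted for free to convert $Q$ into $A$. The only difference is that you make explicit the step the paper leaves implicit — that passing from $Q$ to $A$ requires the conditional law of $b$ given $s$ under $\nu^t$ to be $\pi_2^t(\cdot|s)$ — which is a useful clarification but not a different argument.
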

\begin{proof} Recall the estimators in \citep[Algorithm 1, 3]{agarwal2020optimality} which provide unbiased estimations of $Q^{x, f^t}(s,a,b)$ and $d_\nu^{x, f^t}$. With little abuse of notation, we use $Q^t$, $A^t$ to represent $Q^{x, f^t}$ and $A^{x, f^t}$.
\begin{align*}
    & \quad \E_{s, a, b \sim \nu^t} \E_{v^\prime \sim f^t} [g_n]\\
    &= \E_{s, a, b \sim \nu^t} \hat{Q}(s, a, b) \nabla_{\theta} \log{f^t} (b|s) - \E_{s, a, b \sim \nu^t } \E_{v^\prime \sim f^t} \hat{Q}(s, a, b) \nabla_{\theta} \log{f^t} (v^\prime|s) \\
    &= \E_{s, a, b \sim \nu^t} Q^t (s, a, b) \nabla_{\theta} \log{f^t} (b|s) - \E_{s, a, b \sim \nu^t} V^t (s) \nabla_{\theta} \log{f^t} (b|s)\\
    &= \E_{s, a, b \sim \nu^t} A^t (s, a, b)\nabla_{\theta} \log{f^t} (b|s),
\end{align*}
hence,
\begin{align*}
    &\quad 2 \E_{s, a, b \sim \nu^t} \left[ \left(w_n^\top \nabla_{\theta} \log{f^t(b|s)}\right) \nabla_{\theta} \log{f^t(b|s)} - g_n \right]\\
    &= 2 \E_{s, a, b \sim \nu^t} \left[ w_n^\top \nabla_{\theta} \log{f^t(b|s)} - A^t(s, a, b) \right] \nabla_{\theta} \log{f^t(b|s)}\\
    &= \nabla_w L(w_n)
\end{align*}
Proof is completed.

\end{proof} 

\begin{lemma}[Bounded stat error] \label{lem_stat_error}Assume $\| \nabla_{\theta} \log{f (b|s)} \|_2 \le B$, statistical error of minimizing Eq.~\ref{sec5_update_rule} is bounded
\begin{align*}
    \E\left[ L(\hat{w}^t)\right] - L(w^*)  = \gO(\frac{1}{\sqrt{N}})
\end{align*}
\end{lemma}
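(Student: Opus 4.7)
The plan is to reduce this to the classical convergence rate of projected stochastic gradient descent on a convex objective over a bounded convex domain with bounded unbiased stochastic gradients. First I would observe that $L(w)$ is a quadratic in $w$ with positive semi-definite Hessian $2\,\mathbb{E}_{s,a,b \sim \nu^t}[\nabla_\theta \log \pi_2^t(b|s)\nabla_\theta \log \pi_2^t(b|s)^\top]$, hence convex, and that the previous lemma already shows the per-step direction
\[
\widehat{\nabla}_n \;\coloneqq\; 2\Bigl( w_n^\top \nabla_\theta \log \pi_2^t(b|s)\,\nabla_\theta \log \pi_2^t(b|s) - g_n \Bigr)
\]
satisfies $\mathbb{E}[\widehat{\nabla}_n \mid w_n] = \nabla L(w_n)$. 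Thus Algorithm~\ref{alg:online_NPG}'s inner loop is exactly projected SGD on a convex function over $\gW = \{w : \|w\|_2 \le W\}$.

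Second, I would bound the stochastic gradient norm deterministically. Using $\|\nabla_\theta \log \pi_2^t(b|s)\|_2 \le B$ and $\|w_n\|_2 \le W$ (projection onto $\gW$), the first term inside $\widehat{\nabla}_n$ has norm at most $B^2 W$; since the sampling oracle yields $|\hat{Q}(s,a,b)| \le \tfrac{1}{1-\gamma}$, the vector $g_n = \hat{Q}(s,a,b)\bigl(\nabla_\theta \log \pi_2^t(b|s) - \nabla_\theta \log \pi_2^t(b'|s)\bigr)$ has norm at most $\tfrac{2B}{1-\gamma}$. Combining,
\[
\|\widehat{\nabla}_n\|_2 \;\le\; 2\Bigl(B^2 W + \tfrac{2B}{1-\gamma}\Bigr) \;=\; 2B\Bigl(BW + \tfrac{2}{1-\gamma}\Bigr) \;=\; G,
\]
matching the constant $G$ used in the step size $\alpha = W/(G\sqrt{N})$ stated in Theorem~\ref{thm:online_NPG}.

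Third, I would carry out the textbook one-step recursion. Non-expansiveness of the projection onto the convex set $\gW$ (which contains $w^*$) gives
\[
\|w_{n+1} - w^*\|_2^2 \;\le\; \|w_n - w^*\|_2^2 - 4\alpha \langle \widehat{\nabla}_n,\, w_n - w^*\rangle + 4\alpha^2 \|\widehat{\nabla}_n\|_2^2.
\]
Taking conditional expectation, using unbiasedness and convexity $\langle \nabla L(w_n), w_n - w^*\rangle \ge L(w_n) - L(w^*)$, and plugging in $\|\widehat{\nabla}_n\|_2^2 \le G^2$, rearranging yields
\[
\mathbb{E}\bigl[L(w_n) - L(w^*)\bigr] \;\le\; \tfrac{1}{4\alpha}\,\mathbb{E}\bigl[\|w_n - w^*\|_2^2 - \|w_{n+1} - w^*\|_2^2\bigr] + \alpha G^2.
\]
Summing over $n = 0, \ldots, N-1$, telescoping, using $\|w_0 - w^*\|_2 \le 2W$, and applying Jensen's inequality to $\hat{w}^t = \tfrac{1}{N}\sum_n w_n$ (by convexity of $L$) gives
\[
\mathbb{E}\bigl[L(\hat{w}^t)\bigr] - L(w^*) \;\le\; \tfrac{W^2}{\alpha N} + \alpha G^2.
\]
With the prescribed $\alpha = W/(G\sqrt{N})$, the right-hand side becomes $O(WG/\sqrt{N}) = \gO(1/\sqrt{N})$, treating $W,B,\gamma$ as constants.

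There is no real obstacle here; the only point requiring mild care is verifying that $w^* \in \gW$ so that the projection contraction is valid, which is the standard well-specification assumption on the function class $\gW$, and making sure the bound $G$ matches the one used globally in Theorem~\ref{thm:online_NPG} so that this sub-lemma slots cleanly into the outer analysis.
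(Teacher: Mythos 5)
Your proposal is correct and follows essentially the same route as the paper: the paper's proof simply notes the stochastic gradient is bounded by $G$ and cites the standard projected-SGD guarantee of \citet{shalev2014understanding} with step size $\alpha = W/(G\sqrt{N})$, whereas you inline that standard telescoping argument explicitly. Your constant $G = 2B(BW + \tfrac{2}{1-\gamma})$ matches the one used in Theorem~\ref{thm:online_NPG} (the paper's proof of this lemma writes $\tfrac{1}{1-\gamma}$ in place of $\tfrac{2}{1-\gamma}$, an apparent typo that your accounting of the two gradient terms in $g_n$ corrects).
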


\begin{proof}
    For this sample-based projected gradient descent, notice the estimated gradient is bounded by $G \coloneqq 2B (BW+\frac{1}{1-\gamma})$. \citet{shalev2014understanding} shows if setting learning rate $\alpha = \frac{W}{G\sqrt{N}}$,
\begin{align*}
    \E[L(\Bar{w})] - L(w^*) \le \frac{GW}{\sqrt{N}}
\end{align*}
\end{proof}

\begin{lemma}[Gradient norm bounded for log-linear parameterization]
Suppose $\pi_\theta (a|s) = \frac{\exp{(\theta^\top \phi_{s,a})}}{\sum_{a^\prime \in \gA} \exp{(\theta^\top \phi_{s, a^\prime})}}$ for which $\|\phi_{s,a}\|\le D$, we show $\| \nabla_{\theta} \log{\pi (a|s)} \|_2 \le B= 2D$.
\begin{proof} Proof is straight forward
\begin{align*}
    \nabla_\theta \log \pi_\theta (a|s) &= \phi_{s,a} - \frac { \phi_{s, a^\prime} e^{\theta^\top \phi_{s,a}}}{\sum_{a^\prime} e^{\theta^\top \phi_{s, a^\prime}}}\\
    &= \phi_{s,a} - \sum_{a^\prime} \phi_{s, a^\prime} P(a^\prime),
\end{align*}
where $P(a^\prime) = \frac { e^{\theta^\top \phi_{s,a}}}{\sum_{a^\prime} e^{\theta^\top \phi_{s, a^\prime}}}$. Then
\begin{align*}
    \|\nabla_\theta \log \pi_\theta (a|s)\| &\le \|\phi_{s,a}\| + \|\sum_{a^\prime} \phi_{s, a^\prime} P(a^\prime)\| \\
    &\le \|\phi_{s,a}\| + \sum_{a^\prime} P(a^\prime) \|\phi_{s, a^\prime}\|\\
    &\le 2D.
\end{align*}
Proof is completed.
\end{proof}
\end{lemma}

\begin{lemma}[Iteration error of Algorithm~\ref{alg:online_NPG}]
Set learning rate $\eta = \sqrt{\frac{2 \log{|\gA |}}{\beta T W^2}}$, $\alpha = \frac{W}{G\sqrt{N}}$, initial state-action distribution $\nu_0(s,a,b) = \nicefrac{ \sigma(s) } {|\gA|^2}$, $err_t$ in Lemma~\ref{lem4_NPG_regret} can be bounded  with sample complexity.
\begin{align*}
    |err_t|^2 &\le   \E_{s,a,b}^*
   \left[ A^{x, f^t} (s, a, b) - w^{t} \nabla_{\theta} \log{f^{t} (b|s)} \right]^2  \\
   &\le  \left\| \frac{d_\sigma^{x, f^*}\cdot x \cdot f^*}{\nu^t} \right\|_\infty    \E_{s, a, b \sim \nu^t} \left( A^{x, f^t}(s, a, b) - w^t \nabla_{\theta} \log{f^t(b|s)} \right)^2        \\
   &\le  \frac{1}{1-\gamma} \left\| \frac{d_\sigma^{x, f^*}\cdot x \cdot f^*}{\nu_0} \right\|_\infty    L(\hat{w}^t, \theta)   \\
   &\le  \frac{1}{1-\gamma} \left\| \frac{d_\sigma^{x, f^*}\cdot x \cdot f^*}{\nu_0} \right\|_\infty    L(\hat{w}^t, \theta)   \\
    &\le \frac{|\gA|^2}{1-\gamma} \left\| \frac{d_\sigma^{x, f^*}}{\sigma} \right\|_\infty L(\hat{w}^t, \theta) \\
    &\le \frac{|\gA|^2}{1-\gamma} \left\| \frac{d_\sigma^{x, f^*}}{\sigma} \right\|_\infty \left( L(\hat{w}^t) - L(w^*) + L(w^*) \right)
\end{align*}
From Lemma~\ref{lem_mismatch_coeff}, $\left\| \frac{d_\sigma^{x, f^*}}{\sigma} \right\|_\infty$ is controlled by $\gC_{\sigma, \sigma}^\prime$

Take the expectation on both sides of Lemma~\ref{lem4_NPG_regret}, summation of $err_t$ is bounded
\begin{align*}
    \E \left[ \sum_t \frac{-1}{T} err_t \right] &\le \E \left[ \frac{1}{T} \sum_t \sqrt{ \E_{s,a,b}^* \left(A^{x, f^t}(s, a, b) - w^t \nabla_{\theta} \log{f^t(b|s)}  \right)^2 } \right] \\
    & \le \E \sqrt{ \frac{1}{T} \sum_t  \E_{s,a,b}^* \left( A^{x, f^t}(s, a, b) - w^t \nabla_{\theta} \log{f^t(b|s)}  \right)^2 }, \text{ $y=\sqrt{x}$ is concave}\\
    & \le \sqrt{\frac{1}{T} \sum_t \E\left[ \E_{s,a,b}^* \left( A^{x, f^t}(s, a, b) - w^t \nabla_{\theta} \log{f^t(b|s)}  \right)^2 \right] }\\
    & \le \sqrt{\frac{|\gA|^2}{1-\gamma} \left\| \frac{d_\sigma^{x, f^*}}{\sigma} \right\|_\infty \cdot \E \left[ L(\hat{w}^t) - L(w^*) + L(w^*)\right] }\\
    & \le  \sqrt{\frac{|\gA|^2}{(1-\gamma)^2} \gC_{\sigma, \sigma}^\prime \left( \frac{GW}{\sqrt{N}} + \epsilon_{approx} \right) }
\end{align*}
 Further, $\forall 1\le j \le k-1$, it holds
\begin{align*}
    & \quad \E [ \sum_s \sigma(s) \epsilon_j(s) ]\\
    &= \E \left[ \frac{1}{T} \sum_{t=0}^{T-1}(V^{x, f^t}(\sigma) - V^{x, f^*}(\sigma)) \right]\\
    &\le \sqrt{\frac{2\log{|\gA|}\beta W^2}{T}} + \frac{|\gA|}{(1-\gamma)^2} \sqrt{\gC_{\sigma, \sigma}^\prime \left( \frac{GW}{\sqrt{N}} + \epsilon_{approx} \right)} \\
    &\le \sqrt{\frac{2\log{|\gA|}\beta W^2}{T}} + \frac{|\gA|}{(1-\gamma)^2} \sqrt{\gC_{\sigma, \sigma}^\prime \frac{GW}{\sqrt{N}}} + \frac{|\gA|}{(1-\gamma)^2} \sqrt{\gC_{\sigma, \sigma}^\prime \cdot \epsilon_{approx} }
\end{align*}
Take the expectation on both sides of Lemma~\ref{original_form}, note $\E [ \sup_{1 \le j \le k-1} \|\epsilon_j\|_{1, \sigma} ]$ is also upper bounded by the above inequality, then the proof is completed via substitution.
\end{lemma}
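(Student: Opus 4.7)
My plan is to follow the same two-phase decomposition as in the population case (Theorem~\ref{thm:popu_NPG}), but to carry every step under expectation and substitute in the sampling-based error bounds. The starting point is the expectation form of Lemma~\ref{original_form} stated at the beginning of Appendix~\ref{sec:pf_online}: once we can control $\sup_j \E[\|\epsilon_j\|_{1,\sigma}]$ and $\sup_j \E[\|\epsilon'_j\|_{1,\sigma}]$ by the quantities $\epsilon$ and $\epsilon'$ defined in the theorem, the stated bound drops out by substitution, and the $\gamma^K \gC_{\rho,\sigma}^{K,K+1,0}/(1-\gamma)$ initialization term is handled exactly as in the population proof (using $\|V_0\|_\infty \le 1$).

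For the \textbf{Iteration Step}, I would invoke the two-player NPG regret result Lemma~\ref{lem4_NPG_regret} with fixed $\pi_1 = \pi_1^k$, so that the target best response $\pi_2^*(\pi_1)$ plays the role of the competitor. The key ingredients are: (i) unbiasedness of the sampled gradient $g_n$ used inside the inner projected SGD (already established in Appendix~\ref{sec:pf_online}); (ii) Lemma~\ref{lem_stat_error} which gives $\E[L(\hat w^t)] - L(w^*) \le GW/\sqrt N$ under the step-size choice $\alpha = W/(G\sqrt N)$; and (iii) a change-of-measure using the episodic sampling distribution $\nu_0(s,a,b) = \sigma(s)/|\gA|^2$, combined with Lemma~\ref{lem_mismatch_coeff} to convert $\|d_\sigma^{\pi_1,\pi_2^*}/\sigma\|_\infty$ into $\gC_{\sigma,\sigma}'/(1-\gamma)$. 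Applying Jensen's inequality ($\E\sqrt{\cdot} \le \sqrt{\E[\cdot]}$) to the averaged regret, picking $\eta = \sqrt{2\log|\gA|/(\beta W^2 T)}$ to balance the two $T$-dependent terms, and adding the irreducible approximation error $\epsilon_{\rm approx}$ yields exactly the $\epsilon$ in the theorem statement and so controls $\E[\|\epsilon_j\|_{1,\sigma}]$ uniformly in $j$.

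For the \textbf{Greedy Step}, I would analyze Algorithm~\ref{alg:online_MD} by running the OMD/mirror-descent argument for each player separately against the ideal best response $x^*$ (resp.\ $f^*$). The per-round telescoping gives
\begin{align*}
\E_{s\sim \sigma}\!\bigl[\KL(x^*\|x^t) - \KL(x^*\|x^{t+1})\bigr]
\ge \eta'\,\E_{s\sim \sigma}\bigl[\phi_s(f_t,x^*) - \phi_s(f_t,x_t)\bigr] - \eta'\sqrt{\bigl\|x^*/x^t\bigr\|_\infty}\sqrt{L(\hat w^t)} - \tfrac{\beta}{2}{\eta'}^2 W^2,
\end{align*}
where the square-root term arises from a Cauchy--Schwarz change of measure from $x^t$ to $x^*$ (this is where the assumption $\sqrt{\|x^*/x^t\|_\infty},\sqrt{\|f^*/f^t\|_\infty}\le \iota$ is used). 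Unbiasedness of the inner gradients is checked exactly as in Appendix~\ref{sec:pf_online}, and Lemma~\ref{lem_stat_error} again supplies $\E[L(\hat w^t)] - L(w^*) \le GW/\sqrt{N'}$. Summing over $t=1,\dots,T'$, averaging, dividing by $\eta'$, using $\KL(x^*\|x^1)\le \log|\gA|$, and symmetrically bounding the min-player regret, the two-sided sandwich Eq.~\ref{Eq_online_sandwich} yields
\begin{align*}
\E\bigl[\sup_{x}\inf_{f}\phi_s(f,x) - \inf_{f}\phi_s(f,\bar x_{T'})\bigr]
\le 2\sqrt{\tfrac{2\log|\gA|\beta W^2}{T'}} + 2\iota\bigl(\sqrt{\epsilon_{\rm approx}'} + \sqrt{GW}/{N'}^{1/4}\bigr),
\end{align*}
after choosing $\eta' = \sqrt{2\log|\gA|/(\beta W^2 T')}$. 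This is exactly the $\epsilon'$ in the theorem and so controls $\E[\|\epsilon'_j\|_{1,\sigma}]$.

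The main obstacle I anticipate is the Greedy Step, not the Iteration Step: in the two-player matrix game one cannot simply bound a single player's regret in expectation, because the opponent's moves are correlated with the same sample path, and the projection/change-of-measure from $x^t$ to $x^*$ forces us to pay the factor $\iota$. Getting the $N'^{-1/4}$ rate (rather than $N'^{-1/2}$) inside a square root requires combining Jensen with the SGD rate carefully, so that the noise from the inner projected-SGD loop composes correctly with the outer mirror-descent telescoping. Once both $\epsilon$ and $\epsilon'$ are established, plugging them into the expectation version of Lemma~\ref{original_form} and identifying the concentrability coefficients $\gC_{\rho,\sigma}^{1,K,0}$, $\gC_{\rho,\sigma}^{0,K,0}$, $\gC_{\rho,\sigma}^{K,K+1,0}$ completes the proof of Theorem~\ref{thm:online_NPG}.
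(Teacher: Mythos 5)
Your proposal for the \textbf{Iteration Step} matches the paper's own derivation essentially step for step: Lemma~\ref{lem4_NPG_regret} with $\pi_1=\pi_1^k$ fixed, unbiasedness of $g_n$, the SGD bound of Lemma~\ref{lem_stat_error}, the change of measure through $\nu_0(s,a,b)=\sigma(s)/|\gA|^2$ (paying $|\gA|^2/(1-\gamma)$ and then Lemma~\ref{lem_mismatch_coeff} to reach $\gC_{\sigma,\sigma}'$), the double application of Jensen to pull the $t$-average and the expectation inside the square root, the split $L(\hat w^t)=\bigl(L(\hat w^t)-L(w^*)\bigr)+L(w^*)$, and the step size $\eta=\sqrt{2\log|\gA|/(\beta W^2 T)}$. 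The remainder of your write-up (Greedy Step, final substitution into the expectation form of Lemma~\ref{original_form}) belongs to the surrounding proof of Theorem~\ref{thm:online_NPG} rather than to this lemma, but it is consistent with the paper and introduces no gap.
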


Combining these results, Theorem~\ref{thm:online_NPG} for online setting is concluded.\\\textbf{ [Proof for Theorem~\ref{thm:online_NPG}]} \label{online_thm_pf}
\begin{proof} Substitute $\epsilon$ and $\epsilon^\prime$,
\begin{align*}
        &\E \left[V^*(\rho) - \inf_{f} V^{x^k, f}(\rho) \right] \\
      \le &
      \frac{2(\gamma-\gamma^k) \gC_{\rho, \sigma}^{1, k, 0}}{(1-\gamma)^2} 
        \cdot \epsilon
        + \frac{(1-\gamma^k)\gC_{\rho, \sigma}^{0, k, 0}}{(1-\gamma)^2}  \cdot \epsilon^\prime
        + \frac{ 2 \gamma^k}{1-\gamma} \gC_{\rho, \sigma}^{, k+1, 0} ,
    \end{align*}
    where 
    \begin{align*}
        \epsilon &= \sqrt{\frac{2\log{|\gA|}\beta W^2}{T}} + \frac{|\gA|}{(1-\gamma)^2} \sqrt{\gC_{\sigma, \sigma}^\prime \frac{GW}{\sqrt{N}}} + \frac{|\gA|}{(1-\gamma)^2} \sqrt{\gC_{\sigma, \sigma}^\prime \cdot \epsilon_{approx} }\\
        \epsilon^\prime &= 2\sqrt{\frac{2\log|\gA| \beta {W}^2 }{T^\prime}} + 2\iota \left(\sqrt{\epsilon_{approx}^\prime} + \frac{\sqrt{G W}}{{N^\prime}^{\frac{1}{4}}}\right)
    \end{align*}
When the outer loop count $k$ is set as $K$, proof of Theorem~\ref{thm:online_NPG} is completed.
\end{proof}
    
\end{document}